\newtheorem{theorem}{Theorem}[section]
\newtheorem{proposition}{Proposition}[section]
\newtheorem{lemma}{Lemma}[section]
\newtheorem{definition}{Definition}[section]
\renewcommand{\eqref}[1]{Eq.~(\ref{#1})}
\newcommand{\stoptocwriting}{%
  \addtocontents{toc}{\protect\setcounter{tocdepth}{-5}}}
\newcommand{\vertiii}[1]{{\left\vert\kern-0.25ex\left\vert\kern-0.25ex\left\vert #1
		\right\vert\kern-0.25ex\right\vert\kern-0.25ex\right\vert}}
\DeclareMathOperator{\poly}{poly}
\DeclareMathOperator{\median}{median}
\newcommand{\norm}[1]{\left\lVert#1\right\rVert}
\par\vspace{4mm}}
\newcommand{\cE}{\mathcal{E}}
\newcommand{\cI}{\mathcal{I}}
\newcommand{\cJ}{\mathcal{J}}
\newcommand{\cL}{\mathcal{L}}
\newcommand{\cO}{\mathcal{O}}
\newcommand{\cX}{\mathcal{X}}
\newcommand{\bE}{\mathbb{E}}
\newcommand{\bI}{\mathbb{I}}
\newcommand{\bP}{\mathbb{P}}
\newcommand{\bQ}{\mathbb{Q}}
\newcommand{\bR}{\mathbb{R}}
\title{\textbf{Sparse Mean Estimation in Adversarial Settings via Incremental Learning}}
\author{
Jianhao Ma\\
University of Pennsylvania\\
\texttt{jianhaom@wharton.upenn.edu}\\
\and 
Rui Ray Chen\\
Tsinghua University\\ 
\texttt{chenrui20@mails.tsinghua.edu.cn}
\and
Yinghui He\\
Princeton University\\
\texttt{yh0068@princeton.edu}\\
\and
Salar Fattahi\\
University of Michigan\\
\texttt{fattahi@umich.edu}
\\
\and
Wei Hu\\
University of Michigan\\
\texttt{vvh@umich.edu}
}
\def\eqref#1{equation~\ref{#1}}
\def\1{\bm{1}}
\newcommand{\pr}{\mathrm{Pr}}
\def\vmu{{\bm{\mu}}}
\def\vg{g}
\def\vh{h}
\def\vu{u}
\def\vv{v}
\def\mI{I}
\def\mSigma{\Sigma}
\def\vmu{\mu}
\DeclareMathAlphabet{\mathsfit}{\encodingdefault}{\sfdefault}{m}{sl}
\SetMathAlphabet{\mathsfit}{bold}{\encodingdefault}{\sfdefault}{bx}{n}
\newcommand{\R}{\mathbb{R}}
\newcommand{\Var}{\mathrm{Var}}
\DeclareMathOperator*{\argmin}{arg\,min}
\DeclareMathOperator{\sign}{sign}
\DeclareMathOperator{\tildesign}{\widetilde{sign}}
\begin{document}

\maketitle
\stoptocwriting

\begin{abstract}
    In this paper, we study the problem of sparse mean estimation under adversarial corruptions, where the goal is to estimate the $k$-sparse mean of a heavy-tailed distribution from samples contaminated by adversarial noise. Existing methods face two key limitations: they require prior knowledge of the sparsity level $k$ and scale poorly to high-dimensional settings. We propose a simple and scalable estimator that addresses both challenges. Specifically, it learns the $k$-sparse mean without knowing $k$ in advance and operates in near-linear time and memory with respect to the ambient dimension. Under a moderate signal-to-noise ratio, our method achieves the optimal statistical rate, matching the information-theoretic lower bound. Extensive simulations corroborate our theoretical guarantees.
At the heart of our approach is an {\it incremental learning} phenomenon: we show that a basic subgradient method applied to a nonconvex two-layer formulation with an $\ell_1$-loss can incrementally learn the $k$ nonzero components of the true mean while suppressing the rest. More broadly, our work is the first to reveal the incremental learning phenomenon of the subgradient method in the presence of heavy-tailed distributions and adversarial corruption.
\end{abstract}

\section{Introduction}
\label{sec:intro}
Almost all statistical methods rely explicitly or implicitly on certain assumptions on the distribution of the data. In practice, however, these assumptions are only approximately satisfied, mainly due to the presence of heavy-tailed distributions and adversarial corruptions \citep{rousseeuw2011robust}.
To resolve these issues, the field of robust statistics has been developed to construct estimators that exhibit “\textit{insensitivity to small deviations from the (model) assumptions}” \cite[p.2]{huber2011robust}. Robust statistics has a long history with the fundamental work of John Tukey \citep{tukey1960survey, tukey1962future}, Peter Huber \citep{huber1964robust, huber1967under}, and Frank Hampel \citep{hampel1971general, hampel1974influence}. It has been applied across various domains, such as biology, finance, and computer science \citep{rousseeuw2011robust}.

Nonetheless, in high-dimensional scenarios, robust statistics contend with the \textit{curse of dimensionality}. Firstly, the majority of estimators in the literature demand exponential runtime with respect to data dimension. To resolve this problem, special attention has been devoted to \textit{algorithmic robust statistics}, which aims to design efficient algorithms for different tasks in the high-dimensional robust statistics (see the recent book \citep{diakonikolas2023algorithmic} and survey paper \citep{diakonikolas2019recent}).
Secondly, generic high-dimensional robust statistical tasks are often oblivious to the intrinsic structure of the data. As such, they rely on overly conservative sample sizes that have an undesirable dependency on the data dimension. 

In this paper, we aim to address these challenges for one of the most fundamental problems in robust statistics, namely \textit{robust sparse mean estimation}.
More specifically, given an $\epsilon$-corrupted set of samples from an unknown and possibly heavy-tailed distribution $\bP$ with a $k$-sparse mean $\vmu^{\star} = \bE[X] \in \bR^d$, our goal is to design a computationally and statistically efficient estimator $\hat{\vmu}$ of the mean $\vmu^{\star}$. 
Throughout this paper, we focus on the so-called {\it strong contamination model} \citep[Definition 1.6]{diakonikolas2023algorithmic} for the corruption in the data, which encompasses a variety of existing models, such as Huber's contamination model \citep{huber1964robust}.
\begin{definition}[Strong contamination model]
\label{assumption::comtamination}
Given a corruption parameter $\epsilon \in (0, \epsilon_0)$
and distribution $\bP$, the $\epsilon$-corrupted samples are generated as follows: (i) the algorithm specifies the number of samples $n$ and then $n$ i.i.d. samples are drawn from $\bP$. (ii) An arbitrarily powerful adversary then inspects the samples, removes $\epsilon n$ of them, and replaces them with arbitrary points. The resulting $\epsilon$-corrupted samples are given to the algorithm.
\end{definition}

Designing a statistically and computationally efficient estimator for the mean is highly nontrivial in this setting due to the following reasons.
First, contrary to the robust (dense) mean estimation, there is a conjectured \textit{computational-statistical tradeoff} \citep{diakonikolas2017statistical, brennan2019average, brennan2020reducibility} for the robust $k$-sparse mean estimation, which asserts that any efficient algorithm needs $\tilde{\Omega}(k^2)$ samples, while its statistically-optimal (but possibly inefficient) counterpart only requires $\tilde{\mathcal{O}}(k)$ samples. This conjecture has neither been proved nor refuted. Second, most existing mean estimators are designed for light-tailed distributions \citep{balakrishnan2017computationally, diakonikolas2019outlier, cheng2021outlier}. The only two efficient estimators available for heavy-tailed distributions \citep{diakonikolas2022outlier, diakonikolas2022robust}, however, are impractical for real-world applications, as they rely on computationally intensive techniques such as the ellipsoid algorithm and the sum-of-squares method.
A fundamental question thus arises:
\begin{quote}
    \textit{Can we design a practically efficient estimator for the robust sparse mean estimation problem that overcomes the conjectured computational-statistical tradeoff?} 
\end{quote}

In this work, we provide an affirmative answer to this question under moderate assumptions. Our proposed approach comprises two stages. In the first stage, we provide a coarse-grained estimation of the mean that is enough to identify the top-$k$ nonzero elements of the mean. In particular, we show that a simple subgradient method applied to a two-layer diagonal linear neural network with $\ell_1$-loss can identify the top-$k$ nonzero elements of the mean incrementally and sequentially while keeping the zero entries arbitrarily small. After the identification of the top-$k$ nonzero elements, in the second stage, we provide a finer-grained estimation of the nonzero elements of the mean by employing a generic robust mean estimator---such as those introduced in~\citet{diakonikolas2019recent, cheng2020high}---restricted to the top-$k$ nonzero elements, thereby reducing the effective dimension of the problem from $d$ to $k$.   
Our proposed approach achieves optimal statistical error, sample complexity, and computational cost under moderate assumptions. Furthermore, we demonstrate that these assumptions do not alter the inherent complexity of the problem, as evidenced by a matching information-theoretic lower bound.  \Cref{table::main} provides a summary of our results compared to the existing estimators. Our contributions are summarized below:

\begin{itemize}
    \item [-] {\bf{Overcoming the \textit{computational-statistical tradeoff}.}} We demonstrate that our algorithm can surpass the conjectured \textit{computational-statistical tradeoff} under additional conditions. At a high level, we require an $\epsilon$-dependent upper bound for the coordinate-wise third moment and a lower bound for the signal-to-noise ratio (SNR). Additionally, we demonstrate that our algorithm matches the information-theoretic lower bound under exactly the same conditions.
    \item [-] {\bf{Near-linear dependency on the dimension.}} The first stage of our algorithm is coordinate-wise decomposable and fully parallelizable. Therefore, it runs in $\tilde \cO(d)$ time and memory on a single thread, and in $\tilde \cO(d/K)$ time and $\tilde \cO(d)$ memory on $K$ threads. Moreover, the computational cost of the second stage of our algorithm is independent of $d$. In contrast, the existing robust sparse mean estimators have a poor dependency on $d$ (see \Cref{table::main}).
    \item [-] {\bf{No prior knowledge on the sparsity level.}} Our method does not require prior knowledge of the sparsity level $k$. In contrast, all existing methods for robust sparse mean estimation (in both light- and heavy-tailed settings) require knowledge of the sparsity level $k$. 
    \item [-] {\bf Superior practical performance.} Through extensive experiments, we show that, despite its simplicity, our method performs well across a broad class of heavy-tailed distributions, including those with unbounded variance.
\end{itemize}

\colorlet{shadecolor}{gray!20}
\begin{table*}[t!] 
    \centering
    \small
    \resizebox{0.8\linewidth}{!}{%
        \renewcommand{\arraystretch}{1.5}
        \begin{tabular}{|c|c|c|c|}
            \hline 
            \textbf{Algorithm} &  \textbf{$\ell_2$-error} & \textbf{Sample complexity} & \textbf{Running time} \\
            
            \hhline{|=|=|=|=|}
            Lower bound &  $ \Omega(\sqrt{\epsilon})$ & $\tilde\Omega(k/\epsilon)$ & - \\
            \hline
            \citep{depersin2020robust, prasad2020robust} &  $ \cO(\sqrt{\epsilon})$ & $\tilde\cO(k/\epsilon)$ & $\exp(d)$ \\
            \hline
            \citep{diakonikolas2022outlier} &  $ \cO(\sqrt{\epsilon})$ & $\tilde\cO\left(k^2/\epsilon\right)$ & $\poly(d)$ \\
            \hline
            \citep{diakonikolas2022robust} &  $ \cO(\sqrt{\epsilon})$ & $\tilde\cO(k^{\cO(1)}/\epsilon)$ & $\poly(d)$ \\
            \hline
            \rowcolor{shadecolor} \Gape[0pt][2pt]
            Ours (Stage 1)$^*$ & $\cO(\sqrt{k\epsilon})$ & $\tilde\cO(1/\epsilon)$ & $\tilde\cO(d)$\\
            \hline
            \rowcolor{shadecolor} \Gape[0pt][2pt]
            Ours (full)$^*$ & $\cO(\sqrt{\epsilon})$ & $\tilde\cO(k/\epsilon)$ & $\tilde\cO(d)$\\
            \hline
            
        \end{tabular}
    }
    \caption{\footnotesize Comparisons between different algorithms for robust sparse mean estimation. Here, $k$ represents the sparsity level, $d$ is the ambient dimension, and $\epsilon$ denotes the corruption ratio. We use $\tilde\Omega(\cdot)$ and $\tilde\cO(\cdot)$ to hide logarithmic factors. For simplicity, the dependency on the sample size is omitted in the above comparisons. $^*$Our algorithms require some mild assumptions as detailed in \Cref{thm::main}.
    }
    \label{table::main}
\end{table*}

\section{Related Work}
\label{sec::related-work}
\paragraph{Robust (sparse) mean estimation.} Robust mean estimation is a fundamental problem in statistics, with its earliest work dating back to \citet{tukey1960survey, huber1964robust}. However, throughout its extensive history \citep{yatracos1985rates, donoho1988automatic, donoho1992breakdown, huber2011robust}, and even up to recent times \citep{lugosi2019robust, lugosi2019sub, depersin2020robust, prasad2020robust}, most statisticians have primarily focused on developing statistically optimal estimators, often overlooking the fact that these estimators can be computationally inefficient. It is only recently, following the seminal work of \citet{lai2016agnostic, diakonikolas2019robust}, that researchers have started to develop polynomial-time algorithms for robust mean estimation \citep{diakonikolas2017being, steinhardt2017resilience, cheng2019high} as well as other robust learning tasks, including robust PCA \citep{balakrishnan2017computationally} and robust regression \citep{chen2013robust}.

Robust sparse mean estimation, as a distinct variant, has attracted considerable attention, particularly in extremely high-dimensional settings. However, the situation for robust sparse mean estimation is more nuanced compared to the dense case. Firstly, unlike the dense case, there is a conjectured \textit{computational-statistical tradeoff} \citep{diakonikolas2017statistical, brennan2019average, brennan2020reducibility}, suggesting that efficient algorithms demand a qualitatively larger sample complexity than their inefficient counterparts. In particular, there is evidence that such a tradeoff is unavoidable for Stochastic Query (SQ) algorithms \citep{diakonikolas2017statistical}. On the other hand, most prior works have primarily concentrated on the light-tailed setting \citep{balakrishnan2017computationally, diakonikolas2019outlier, cheng2021outlier}. Researchers have only recently addressed the heavy-tailed setting using stability-based approaches \citep{diakonikolas2022outlier} and sum-of-squares methods \citep{diakonikolas2022robust}. While these algorithms are polynomial-time, they may not be practical when dealing with high-dimensional settings.

\paragraph{Incremental learning.} Over the past few years, it has been shown practically and theoretically that gradient-based methods tend to explore the solution space in an incremental order of complexity, ultimately favoring low-complexity solutions in numerous machine learning tasks \citep{gissin2019implicit, ma2025implicit}. This phenomenon is known as \textit{incremental learning}. Specifically, researchers have investigated incremental learning in various contexts, such as
matrix factorization and its variants \citep{li2020towards, ma2022behind, jin2023understanding}, tensor factorization \citep{razin2021implicit, razin2022implicit, ma2022behind}, deep linear networks \citep{arora2019implicit,gidel2019implicit, li2021implicit, mablessing}, and general neural networks \citep{hu2020surprising, frei2022implicit}. In essence, incremental learning is believed to be crucial for understanding the empirical success of optimization and generalization in contemporary machine learning \citep{gissin2019implicit}. However, to the best of our knowledge, its emergence in adversarial settings remains unexplored.

\paragraph{Notation:} 
We use the notations $a(n)\lesssim b(n)$ and $a(n) = \cO(b(n))$ to denote $a(n)\leq Cb(n)$, for a universal constant $C$ and sufficiently large $n$. Similarly, the notations $a(n)\gtrsim b(n)$ and $a(n)=\Omega(b(n))$ are used to denote $a(n)\geq Cb(n)$, for a universal constant $C$ and sufficiently large $n$.
The notation $a=\Theta(b)$ is used to denote $a=\cO(b)$ and $b=\cO(a)$. Moreover, the notation $a(n) = o(b(n))$ implies that $\lim_{n\to+\infty}a(n)/b(n)=0$. The $\sign(\cdot)$ function is defined as $\sign(x)=x/|x|$ if $x\neq 0$, and $\sign(0)=[-1,1]$. We also define $\tildesign(x)=x/|x|$ if $x\neq 0$, and $\tildesign(0)=0$. Given a set $\cX$, the indicator function $\bI_{\cX}(\cdot)$ is defined as $\bI_{\cX}(x)=1$ if $x\in \cX$, and $\bI_{\cX}(x)=0$ otherwise. Similarly, and with a slight abuse of notation, for an event $\cE$, we define the indicator function $\bI(\cE) = 1$ if $\cE$ occurs, and $\bI(\cE) = 0$ otherwise.  We denote $[n]:=\{1,2,\cdots,n\}$. For two functions $f,g:\mathbb{R}^d\to \mathbb{R}$, we define $\norm{f-g}_{\infty} = \sup_{x\in \mathbb{R}^d}|f(x)-g(x)|$. For two vectors $x,y\in\R^d$, their Hadamard product is defined as $x\odot y=[x_1y_1\ \cdots\ x_dy_d]^\top$. For a vector $x\in\R^d$, we define $x^2=[x_1^2, \cdots, x_d^2]^{\top}$. For a vector $x\in \R^d$ and index set $I$ with size $k$, the notation $[x]_I\in \R^k$ refers to the projection of $x$ onto $I$. Moreover, we define $x\wedge y=\min\{x, y\}$. We represent mixtures of probability distributions as linear combinations of their corresponding density functions. For example, given two distributions $\bP_1$ and $\bP_2$ and a scalar $0 \leq \epsilon \leq 1$, we define the mixture $\bP_3 = (1-\epsilon)\bP_1 + \epsilon \bP_2$. A sample from $\bP_3$ is drawn from $\bP_1$ with probability $1-\epsilon$ and from $\bP_2$ with probability $\epsilon$.

\section{Overview of Our Approach}
\label{sec::overview-techniques}
To lay the groundwork, we begin by introducing the standard \textit{median-of-means} (\textsf{MoM}) estimator \citep{nemirovskij1983problem, jerrum1986random, alon1996space} originally designed for estimating the mean of a one-dimensional random variable. \textsf{MoM} estimator serves as a cornerstone for more sophisticated methods as detailed in \citet{lugosi2019sub, prasad2020robust1, lecue2020robust, diakonikolas2022outlier}. 

\begin{definition}[Median-of-means estimator for one-dimensional case]
    \label{def::mom}
    Given a set of $\epsilon$-corrupted samples $S=\{X_1, \cdots, X_n\}\subset \mathbb{R}$, we first partition them into $J$ subgroups $S_1, \cdots, S_J$ with equal sizes, where we assume $n$ is divisible by $J$ for simplicity. We then calculate the sample mean for each subgroup, i.e., $\bar X_j=\frac{1}{B}\sum_{i\in S_j}X_i$ where $B=n/J$. Subsequently, the median-of-means (\textsf{MoM}) estimator is obtained by taking the median of the sample means $\bar X_1, \cdots, \bar X_J$, i.e., $\hat{\mu}_{\textsf{MoM}}=\median\left\{\bar X_1, \cdots, \bar X_J\right\}$.
\end{definition}
Alternatively, the \textsf{MoM} estimator can be expressed as the minimizer of the following $\ell_1$-loss:
\begin{equation}
    \hat{\mu}_{\textsf{MoM}}=\argmin_{\mu\in \bR} \frac{1}{J}\sum_{j=1}^{J}\left|\bar X_j-\mu\right|.
\end{equation}
By appropriately selecting the number of subgroups $J$, it can be shown that the \textsf{MoM} estimator matches the information-theoretic lower bound $\Omega(\sigma\sqrt{\epsilon})$ for heavy-tailed distributions under the strong contamination model (\Cref{assumption::comtamination}).
\begin{sloppypar}
\begin{proposition}[One-dimensional \textsf{MoM} estimator]
    \label{prop::mom-1d}
    Consider a corruption parameter $\epsilon$, a failure probability $\delta$, and a set $S$ of $n$ many $\epsilon$-corrupted samples from a distribution $\bP$ with mean $\mu^{\star}\in \mathbb{R}$ and variance $\bE[(X-\mu^{\star})^2]\leq \sigma^2$. Suppose that $n\gtrsim \log(1/\delta)/\epsilon$. Then, upon choosing the number of subgroups $J=\Theta(\lceil \epsilon n\rceil+\log(1/\delta))$, with probability at least $1-\delta$ over the sample set $S$, the \textsf{MoM} estimator $\hat{\mu}_{\textsf{MoM}}$ satisfies $|\hat{\mu}_{\textsf{MoM}}-\mu^{\star}|=\cO\left(\sigma\sqrt{\epsilon}\right)$.
\end{proposition}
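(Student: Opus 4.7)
The plan is to establish the classical three-ingredient proof for the median-of-means estimator: per-block concentration via Chebyshev, a Chernoff bound to control how many blocks are ``good,'' and a counting argument to bound how many blocks the adversary can corrupt. Combining these lets the median argument go through.

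First, I would analyze the hypothetical clean block means. Let $\tilde X_1,\dots,\tilde X_J$ denote the means of the blocks before any adversarial replacement, so each $\tilde X_j$ is an average of $B=n/J$ i.i.d.\ samples from $\bP$ and thus has mean $\mu^\star$ and variance at most $\sigma^2/B$. Chebyshev's inequality then gives $\Pr\bigl[|\tilde X_j-\mu^\star|\ge t\sigma/\sqrt{B}\bigr]\le 1/t^2$, so for $t=3$ (or any fixed constant) each block is ``$t$-good'' with probability at least $8/9$, independently across blocks. Call this indicator $G_j$ and note $\bE[G_j]\ge 8/9$.

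Second, I would use a Chernoff (or Hoeffding) bound on $\sum_{j=1}^{J} G_j$ to show that with probability $\ge 1-\exp(-cJ)$ the number of good clean blocks is at least, say, $5J/6$. Requiring $J\gtrsim \log(1/\delta)$ makes this failure probability at most $\delta/2$. Third, I would bound the number of adversarially corrupted blocks: since the adversary replaces only $\err n$ points and each point lies in a single block, at most $\err n$ blocks differ between $\{\tilde X_j\}$ and $\{\bar X_j\}$. Choosing $J=\Theta(\lceil\err n\rceil+\log(1/\delta))$ with a sufficiently large hidden constant ensures that $\err n \le J/6$, and combined with the Chernoff step this yields at least $5J/6-J/6=2J/3>J/2$ blocks for which $\bar X_j=\tilde X_j$ and $|\bar X_j-\mu^\star|\le 3\sigma/\sqrt{B}$.

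Finally, I would invoke the elementary property of the median: if strictly more than half of the scalars $\bar X_1,\dots,\bar X_J$ lie in an interval $[\mu^\star-r,\mu^\star+r]$, then $\median\{\bar X_j\}$ lies in the same interval. Applied with $r=3\sigma/\sqrt{B}$, this gives $|\hat\mu_{\mathsf{MoM}}-\mu^\star|\le 3\sigma/\sqrt{B}$. Substituting $B=n/J$ with $J=\Theta(\err n+\log(1/\delta))$ and using the standing hypothesis $n\gtrsim \log(1/\delta)/\err$ (so that the $\err n$ term dominates) yields $B=\Theta(1/\err)$, hence $|\hat\mu_{\mathsf{MoM}}-\mu^\star|=\cO(\sigma\sqrt{\err})$, as claimed. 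A union bound over the two failure events (Chernoff and any exceptional events) keeps the total failure probability below $\delta$.

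The one delicate point, and the only real obstacle, is the calibration of constants in the choice of $J$: the constant inside $\Theta(\cdot)$ must simultaneously (i) make $\err n$ a strict minority of $J$ so corrupted blocks cannot win the median vote, (ii) make $\log(1/\delta)$ large enough to drive the Chernoff bound below $\delta$, and (iii) keep $B=n/J$ large enough that $\sigma/\sqrt{B}=\Theta(\sigma\sqrt{\err})$ rather than something worse. All three fit together because the hypothesis $n\gtrsim \log(1/\delta)/\err$ is precisely what guarantees $\err n \gtrsim \log(1/\delta)$, so neither term in $J$ forces $B$ below $\Theta(1/\err)$.
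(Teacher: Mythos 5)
Your proposal is correct and follows essentially the same route as the paper's proof: Chebyshev for per-block deviation, a Hoeffding/Chernoff bound for the count of bad (or good) clean blocks, a counting argument that at most $\lceil\err n\rceil$ blocks are corrupted, and the median's robustness to a sub-majority of outliers. The paper phrases it contrapositively — directly bounding $\Pr(|\hat\mu_{\mathsf{MoM}}-\mu^\star|\geq\xi)$ by the probability that too many clean blocks are $\xi$-far — but the ingredients and the calibration of $J$ are identical.
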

\end{sloppypar}
A more precise statement of \Cref{prop::mom-1d} and its proof are presented in Appendix \ref{sec::mom}.

Naively applying \textsf{MoM} estimator to different coordinates of a high-dimensional random variable leads to an undesirable dependency on the dimension $d$. More precisely, the coordinate-wise \textsf{MoM}, which corresponds to the solution to the following convex optimization

\begin{align}
    \hat{\vmu}_{\textsf{MoM}}=\argmin_{\vmu\in \bR^d} \cL_{\textsf{cvx}}(\vmu):=\frac{1}{J}\sum_{j=1}^{J}\norm{\bar X_j-\vmu}_1,
    \tag{\textsc{cvx}}\label{eq::convex}
\end{align}
suffers from a suboptimal error rate of $\norm{\hat{\vmu}_{\textsf{MoM}}-\mu^\star}_2 = \cO(\sigma\sqrt{d\epsilon})$ (see \Cref{thm::mom-high-dim} in Appendix \ref{sec::mom}). This error is unavoidable for the \textsf{MoM} estimator since the coordinate-wise error $\cO(\sigma\sqrt{\epsilon})$ is uniformly distributed across each coordinate. An alternative approach, the geometric \textsf{MoM} \citep{minsker2015geometric}, which replaces the $\norm{\cdot}_1$ in ~\ref{eq::convex} by $\norm{\cdot}_2$, also suffers from a similar error.

\paragraph{Two-layer model} To address the above issue, we model the mean $\vmu$ as a two-layer model $\vu^2-\vv^2$ for $u,v\in\R^d$, and obtain $(u,v)$ by minimizing the following nonconvex $\ell_1$-loss
\begin{align}
    \min_{\vu, \vv\in \bR^d}\cL_{\textsf{ncvx}}(\vu, \vv)=\frac{1}{2J}\sum_{j=1}^{J}\norm{\bar X_j - \left(\vu^2-\vv^2\right)}_1.
    \tag{\textsc{ncvx}}\label{eq::nonconvex}
\end{align}
To solve this optimization problem, we propose a subgradient method (SubGM) with small initialization $\vu(0)=\vv(0)=\alpha \vec{1}$, where $\vec{1}=[1, \cdots, 1]^{\top}\in \bR^d$ and $\alpha>0$ is a sufficiently small factor. At each iteration, SubGM updates the solution as
\begin{align}
		\vu(t+1)&=\vu(t)-\eta \vg(t) \quad\text{where}\quad\vg(t)\in \partial_\vu \cL_{\textsf{ncvx}}(\vu(t), \vv(t)),\nonumber\\
        \vv(t+1)&=\vv(t)-\eta \vh(t)\quad\text{where}\quad\vh(t)\in \partial_\vv \cL_{\textsf{ncvx}}(\vu(t), \vv(t)).\tag{\textsc{SubGM}}
\label{eq::2}
\end{align}
Here, $\eta>0$ is the stepsize, and $\partial_\vu \cL_{\textsf{ncvx}}(\vu, \vv)$ and $\partial_\vv \cL_{\textsf{ncvx}}(\vu, \vv)$ indicate the (Clarke) subdifferentials of $\cL_{\textsf{ncvx}}$, defined as:
\begin{align}
    \partial_\vu \cL_{\textsf{ncvx}}(\vu, \vv) &=\frac{1}{J}\sum_{j=1}^{J}\sign(\vu^2-\vv^2-\bar X_j)\odot\vu,\\
    \partial_\vv \cL_{\textsf{ncvx}}(\vu, \vv) &=-\frac{1}{J}\sum_{j=1}^{J}\sign(\vu^2-\vv^2-\bar X_j)\odot\vv.
\end{align}
The detailed implementation of our proposed algorithm is presented in \Cref{alg:main}.

\begin{algorithm}[h]
	\caption{\textsc{Robust sparse mean estimation via incremental learning}}\label{alg:main}
	\begin{algorithmic}[1]
        \Statex \textbf{Input:} dataset $S$, corruption parameter $\epsilon$, failure probability $\delta$, initialization scale $\alpha$, stepsize $\eta$, and iteration time $T\in \left[\frac{2}{\eta}\log(1/\alpha), \frac{6}{\eta}\log(1/\alpha)\right]$.
        \Stage
        \State {\bf Pre-processing:} Divide the dataset into $J$ equal subgroups $S_1, \cdots, S_J$, where $J=100\lceil\epsilon n\rceil$. Calculate the sample means $\bar X_j=\frac{J}{n}\sum_{i\in S_j}X_i$.
		\State {\bf Initialization:}  $\vu(0)=\vv(0)=\alpha.\vec{1}$
        \For{$t=1,\cdots,T$}
		\State Update $\vu(t), \vv(t)$ via \ref{eq::2}.
		\EndFor
        \State {\bf Identification of top-$k$ elements:} Calculate $I=\left\{i\in [d]:|u^2_i(T)-v^2_i(T)|\geq \alpha\right\}$.
        \State \textbf{Return} $\hat\vmu(T)=\vu^2(T)-\vv^2(T)$.
        \EndStage
        \Stagetwo
    \State Consider the projected dataset $S_k = \{[X_i]_{I}:X_i\in S\}$ and apply an existing robust mean estimator (e.g. those introduced in \citet{diakonikolas2019recent, cheng2020high}) to $S_k$.
    \EndStagetwo
\end{algorithmic}
\end{algorithm}
\begin{figure*}\centering
    \begin{centering}
    \subfloat[nonconvex]{
    {\includegraphics[width=0.35\linewidth]{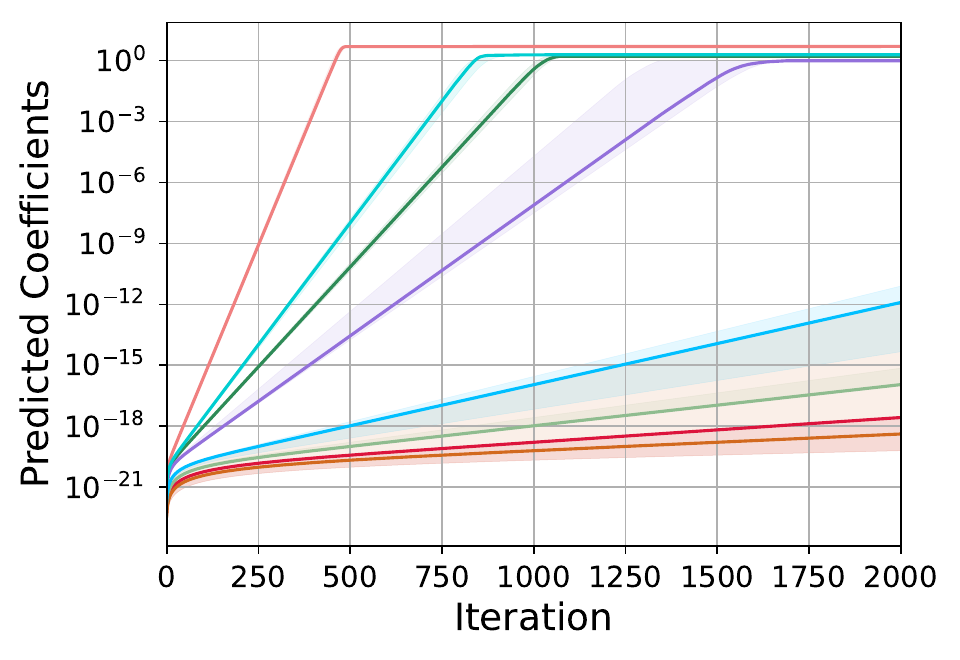}}\label{fig::nonconvex}}
    \subfloat[convex]{
    {\includegraphics[width=0.35\linewidth]{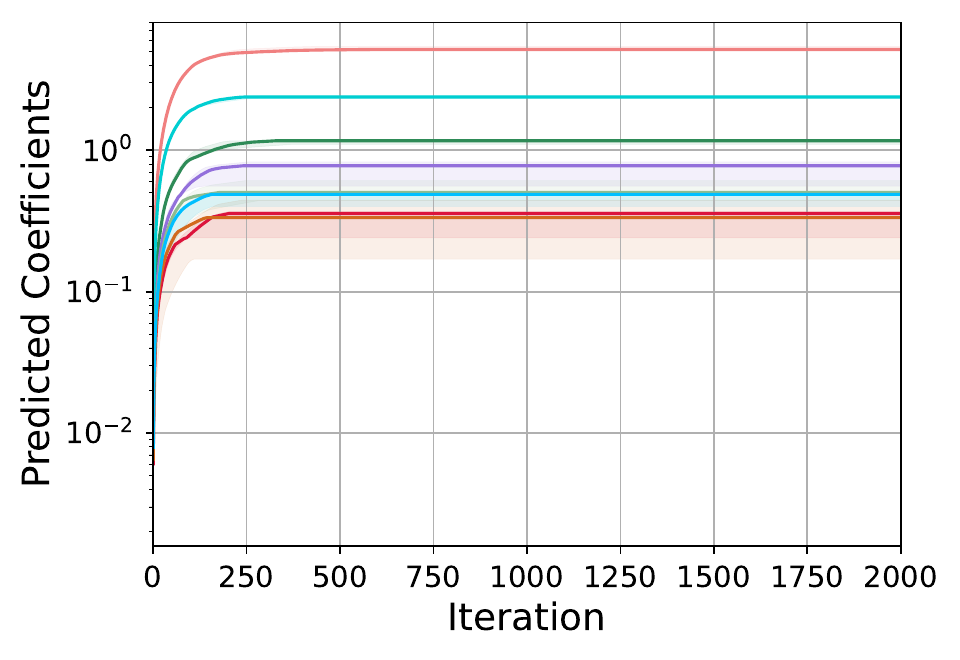}}\label{fig::convex}}
    \end{centering}
    \caption{\footnotesize The predicted coefficients for \ref{eq::nonconvex} and \ref{eq::convex}. We run the subgradient method with stepsize $\eta=0.07$ in both settings. We initialize \ref{eq::nonconvex} with $\alpha=1\times 10^{-10}$ and use a zero initialization for \ref{eq::convex}. We generate the inliers with $d=500, n=2000$ from a lognormal distribution with a variance of $3.3$ and a $k$-sparse mean with $k=4$ and nonzero elements $[5, 2, 2, 1.5]$. The corruption rate is $0.05$ and the outliers are generated from a Cauchy distribution with a mean of $20$ and variance of $50$.}
    \label{fig::convex-vs-nonconvex}
\end{figure*}

Our key contribution is to reveal the emergence of incremental learning: we show that SubGM with small initialization learns the nonzero components (signals) long before overfitting the zero components (residuals) to noise. Consequently, there exists a wide range of iterations within which the signals are in the order of $\Omega(1)$ while the residuals remain in the order of $\cO(\alpha)$ (see \Cref{fig::nonconvex}). Remarkably, we show that this interval only depends on the stepsize $\eta$ and the initialization scale $\alpha$, and  it can be widened by reducing these user-defined parameters. In stark contrast, differentiating between the signals and residuals is challenging in the convex setting (\ref{eq::convex}) precisely due to the lack of incremental learning, as shown in \Cref{fig::convex}. After successfully identifying the locations of the top-$k$ elements, we can employ existing robust mean estimation techniques \citep{diakonikolas2019recent, cheng2020high} on the dataset projected onto the recovered support to further improve the estimation of the top-$k$ nonzero elements.

\section{Main Result}
\label{sec::main_result}
In this section, we present the theoretical guarantees for \Cref{alg:main}. We begin by analyzing the first stage of the algorithm, which focuses on recovering the support of the true mean.
\subsection{Stage 1: Identification of Support via Coarse-grained Estimation}
\label{sec::stage1}
We denote $\mu_{\max}^{\star} = \max_i\{|\mu_i^\star|\}$ and $\mu_{\min}^{\star} = \min_{i}\{|\mu_i^\star|: \mu_i^\star\not=0\}$. Our main theorem is presented next.
\begin{theorem}[Convergence guarantee for SubGM]
    \label{thm::main}
    Let $\bP$ be a distribution on $\bR^d$ with an unknown $k$-sparse mean $\vmu^{\star}$, unknown covariance matrix $\mSigma \preceq \sigma^2\mI$, and unknown coordinate-wise third moment satisfying $\bE[|X_i-\mu_i^{\star}|^3]\lesssim \sigma^3/\sqrt{\epsilon}, \forall 1\leq i\leq d$. Suppose a sample set of size $n\gtrsim\log(d/\delta)/\epsilon$ is collected according to the strong contamination model (\Cref{assumption::comtamination}) with corruption parameter $\epsilon$. Upon setting the stepsize $\eta\leq \sigma\sqrt{\epsilon}/\mu_{\max}^{\star}$ and the initialization scale $0<\alpha\lesssim \sigma\sqrt{\epsilon/d}\wedge \mu_{\max}^{\star -5}$ in \Cref{alg:main}, with a probability of at least $1-\delta$, the following statements hold for any iteration $\frac{2}{\eta}\log(1/\alpha)\leq T\leq \frac{6}{\eta}\log(1/\alpha)$:
\begin{itemize}
    \item \textbf{$\ell_2$-error.} The $\ell_2$-error is upper-bounded by
    \begin{equation}
        \norm{\hat\vmu(T)-\vmu^{\star}}_2\lesssim \sigma\sqrt{k\epsilon}.
    \end{equation}
    \item \textbf{Identification of the top-$k$ elements.} If we additionally have $\epsilon\lesssim\mu_{\min}^{\star 2}/\sigma^2$, then we obtain 
    \begin{equation}
        \begin{aligned}
            |\hat{\mu}_i(T)|&\gtrsim\sigma\sqrt{\epsilon}, \quad &\text{where }\mu_i^{\star}\neq 0,\\
            |\hat{\mu}_i(T)|&\lesssim \alpha, \quad &\text{where }\mu_i^{\star}= 0.
        \end{aligned}
    \end{equation}
\end{itemize}
\end{theorem}

{\bf Comparison to the existing results.} 
Simply applying coordinate-wise \textsf{MoM} estimator results in an $\ell_2$-error rate $\cO(\sigma\sqrt{d\epsilon})$, which is considerably worse than our result when $k\ll d$. On the other hand, to guarantee a correct support recovery, the previous efficient estimators rely on prior knowledge of $k$, while the coordinate-wise \textsf{MoM} requires an accurate value of $\mu_{\min}^{\star}$ to separate the signals from residuals (as evidenced by \Cref{fig::convex}).
In contrast, our proposed algorithm only requires a lower bound $\hat\mu_{\min}\leq \mu_{\min}^{\star}$ to differentiate the signals from residuals; in fact, this lower bound can be arbitrarily small (i.e., conservative) provided that the initialization scale is chosen as $\alpha\ll \hat\mu_{\min}$. We also highlight that, much like other existing estimators under the strong contamination model, our estimator requires prior knowledge of the corruption parameter $\epsilon$ (or its upper bound).

{\bf Proof sketch.} We next provide the proof sketch of the above theorem, deferring its details to \Cref{sec::deferred-proofs}. Specifically, we analyze the coordinate-wise dynamic $\hat{\mu}_i(t)=u_i^2(t)-v_i^2(t)$ for some $1\leq i\leq d$. Without loss of generality, we assume $\mu_i^{\star}\geq 0$. Upon defining $\beta_i(t)=\frac{1}{J}\sum_{j=1}^{J}\tildesign(\bar X_{j, i}-\hat{\mu}_i(t))$, the update rules for $\vu_i(t)$ and $\vv_i(t)$ can be written as 
\begin{equation}
    \begin{aligned}
        u_i(t+1)=\left(1+\eta\beta_i(t)\right)u_i(t), \quad v_i(t+1)=\left(1-\eta\beta_i(t)\right)v_i(t).
    \end{aligned}
\end{equation}
Based on the above update rules, $\beta_i(t)$ controls the growth rate of the dynamics. Indeed,  during the initial iterations, we have $\hat{\mu}_i(t)\approx\hat{\mu}_i(0) = u_i^2(0)-v_i^2(0) = 0$, which in turn implies that $\beta_i(t)\approx \beta_i(0)$. Consequently, the dynamics of $\vu_i(t)$ and $\vv_i(t)$ can be well approximated using the following exponential functions
\begin{equation}
    \vu_i(t)\approx (1+\eta\beta_i(0))^t\alpha, \quad \vv_i(t)\approx (1-\eta\beta_i(0))^t\alpha.
\end{equation}
Therefore, to analyze the behaviors of $\vu_i(t)$ and $\vv_i(t)$, it suffices to characterize the magnitude of $\beta_i(0)$ for different coordinates. To achieve this, we define $\cJ_{\text{clean}}$ as the index set of the subgroups $[J]$ that do not contain any outliers, and denote its complement as $\cJ_{\text{outlier}}=[J]\backslash\cJ_{\text{clean}}$. We have
\begin{align}
    \beta_i(0)&=\frac{1}{J}\sum_{j\in \cJ_{\text{clean}  }}\tildesign(\bar X_{j, i})\pm \frac{\left|\cJ_{\text{outlier}}\right|}{J}\tag{denote $\delta=\frac{\left|\cJ_{\text{outlier}}\right|}{J}$}\\
    &\approx (1-\delta)\bE\left[\tildesign(\bar X_{j, i})\right]\pm \delta\tag{for sufficiently large $\cJ_{\text{clean}  }$}\\
    &=(1-\delta)\left(1-2\pr\left(\bar X_{j, i}-\mu_i^{\star}\leq-\mu_i^{\star}\right)\right)\pm \delta\nonumber\\
    &\approx (1-\delta)(1-2\Phi(-\mu_i^{\star}\cdot\sqrt{B\Var(X)}))\pm\delta. \tag{due to finite-sample central limit theorem}
\end{align}
Here, $B=n/J$ is the size of each subgroup, and $\Phi(\cdot)$ represents the cumulative distribution function (CDF) of the standard Gaussian distribution. Let us define $\cI_{\text{residual}} = \{i: \mu_i^\star = 0\}$ and $\cI_{\text{signal}} = \{i: \mu_i^\star \not= 0\}$. Based on the above characterization of $\beta_i(0)$, for all $i\in \cI_{\text{residual}}$, we have $1-2\Phi(-\mu_i^{\star}\cdot\sqrt{B\Var(X)})=0$, which in turn implies $\beta_i(0)\approx \pm\delta$. Furthermore, by setting $J=C \lceil\epsilon n\rceil$ with a suitably large constant $C$, $B=n/J\geq 1/(C\epsilon)$ can be made sufficiently large given a sufficiently small $\epsilon$. This ensures that $\beta_i(0)\approx \Omega(1-\delta)\pm \delta$ for all $i\in \cI_{\text{signal}}$. On the other hand, we have $\delta\leq \lceil\epsilon n\rceil/J\leq 1/C$ since $\left|\cJ_{\text{outlier}}\right|\leq \lceil\epsilon n\rceil$.
As a result, $|\beta_i(0)|$ can be made arbitrarily small for all $i\in \cI_{\text{residual}}$ and $\beta_i(0)= \Omega(1)$ for all $i\in \cI_{\text{signal}}$. This discrepancy in the growth rates of $u_i(t)$ and $v_i(t)$ enables our algorithm to separate the signals from residuals within just a few iterations. In \Cref{sec::deferred-proofs}, we provide a more delicate analysis of the dynamics, showing that for all $T\in [\frac{2}{\eta}\log(1/\alpha), \frac{6}{\eta}\log(1/\alpha)]$ we have
\begin{equation}
    \begin{aligned}
        &\, u_i^2(t)-v_i^2(t)= \mu_i^{\star}\pm \cO(\sigma \sqrt{\epsilon}), \quad &&\text{for } i\in\cI_{\text{signal}},\\
        &\left|u_i^2(t)-v_i^2(t)\right|=\poly(\alpha)\ll \sigma \sqrt{\epsilon}, \quad &&\text{for }i\in \cI_{\text{residual}}.
    \end{aligned}
\end{equation}
The above equation sheds light on the key difference between \ref{eq::nonconvex} and \ref{eq::convex}: unlike ~\ref{eq::convex} where the error is equally distributed across different coordinates, the error in~\ref{eq::nonconvex} is primarily distributed among the signals, while the error in the residuals can be kept arbitrarily small by a proper choice of the initialization scale $\alpha$. 
This implies that, if the signals are sufficiently larger than the induced error, i.e., $|\mu_i^{\star}|\gtrsim \sigma\sqrt{\epsilon}, \forall i\in \cI_{\text{signal}}$, our algorithm can successfully identify the signals.

\subsection{Stage 2: Achieving Optimal Rate on the Support via Fine-grained Estimation}
As illustrated in \Cref{sec::stage1}, a direct application of SubGM leads to an estimation error of $\cO(\sqrt{k\epsilon})$. In this section, we show that this error can be further improved once the support of the mean is identified correctly. Our key insight is that once the support of the mean is recovered, we can reduce the problem to a robust \textit{dense} mean estimation defined \textit{only} over the recovered support. 
Under such a regime, existing estimators designed for robust dense mean estimation \citep{diakonikolas2019recent, cheng2020high} can be employed to further reduce the estimation error. 
\begin{proposition}[Adapted from Proposition 1.6 in \citet{diakonikolas2020outlier}]
\label{prop::k-dimensional}
    Let $\bP$ be a distribution on $\bR^k$ with an unknown mean $\vmu^{\star}$ and unknown covariance matrix $\mSigma \preceq \sigma^2\mI$. Suppose a sample set of size $n$ is collected according to the strong contamination model (\Cref{assumption::comtamination}) with corruption parameter $\epsilon< 1/2$. Then, there exists an algorithm that runs in $\cO(kn)$ time and memory and, with a probability of at least $1-\delta$, outputs an estimator $\hat{\mu}$ that satisfies 
    $$\norm{\hat{\vmu}-\vmu^{\star}}_2\lesssim\sigma\sqrt{\epsilon}+\sigma\sqrt{k/n}+\sigma\sqrt{\log(1/\delta)/n}.$$
\end{proposition}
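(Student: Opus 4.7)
The plan is to invoke the standard stability-based framework for robust dense mean estimation. Recall that a finite multiset $S \subset \bR^k$ is called $(\err,\delta)$-stable with respect to $\vmu^\star$ if for every sub-multiset $S' \subseteq S$ with $|S'| \geq (1-\err)|S|$, the sample mean of $S'$ lies within $\delta$ of $\vmu^\star$ and the recentered sample covariance has operator norm at most $\sigma^2 + \cO(\delta^2/\err)$. The first step is to show that $n$ i.i.d.\ samples from $\bP$ form an $(\err,\delta)$-stable set with probability at least $1-\delta$, provided $n \gtrsim (k + \log(1/\delta))/\err$, with the stability radius $\delta \lesssim \sigma\bigl(\sqrt{\err} + \sqrt{k/n} + \sqrt{\log(1/\delta)/n}\bigr)$. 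The mean component of stability follows from Chebyshev combined with a union bound over a covering class of candidate subsets (equivalently from a resilience argument), while the covariance component follows from matrix-Bernstein applied to the rank-one random matrices $(X_i - \vmu^\star)(X_i - \vmu^\star)^\top$, after a truncation step to convert the variance bound into a sub-exponential tail suitable for concentration.

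The second step is to invoke, conditional on the inlier stability event, a near-linear-time filtering or gradient-based robust mean estimation algorithm in the spirit of \cite{cheng2020high} or those surveyed in \cite{diakonikolas2019recent}. The governing guarantee of such algorithms is: for any $\err$-corrupted input whose clean portion is $(\err,\delta)$-stable, the returned estimator $\hat{\vmu}$ satisfies $\norm{\hat{\vmu} - \vmu^\star} = \cO(\delta)$. The $\cO(kn)$ runtime is achieved by avoiding explicit eigendecomposition of the $k \times k$ reweighted sample covariance: each filtering round performs a constant number of power-iteration steps, each costing $\cO(kn)$ via a batched covariance-vector product across samples, and the procedure terminates after $\mathrm{polylog}(1/\err)$ outer rounds. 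The working memory is $\cO(kn)$ because only the sample matrix and a vector of weights need to be stored.

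The main technical obstacle is verifying that the covariance-only assumption $\mSigma \preceq \sigma^2 \mI$ (with no higher-moment control) suffices for the stability step; this uses a standard but delicate truncation argument that turns the $\sigma^2$ variance bound into the high-probability concentration required by matrix-Bernstein, while simultaneously controlling the bias introduced by truncation so that it is absorbed into the final $\sigma\sqrt{\err}$ term. Once both the stability event and the algorithmic guarantee are composed via a union bound over the sampling randomness and the algorithm's internal randomness, the claimed rate $\norm{\hat{\vmu}-\vmu^\star} = \cO\bigl(\sigma\sqrt{\err} + \sigma\sqrt{k/n} + \sigma\sqrt{\log(1/\delta)/n}\bigr)$ follows immediately. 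Since the entire argument above is carried out verbatim in \cite{diakonikolas2020outlier} for the stated parameter regime, the present proposition is obtained by direct appeal to that work, with only a bookkeeping check that the covariance assumption and the confidence parameter $\delta$ match.
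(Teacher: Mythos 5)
The paper does not reprove this proposition; it is stated as "Adapted from Proposition 1.6 in \cite{diakonikolas2020outlier}" and invoked as a black box in the proof of \Cref{thm::main2}, exactly as you note in your final paragraph. Your reconstruction of the underlying stability-plus-filtering argument is a reasonable (if more detailed than necessary) account of what the cited reference does, so your proposal matches the paper's approach of direct appeal to \cite{diakonikolas2020outlier}.
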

Equipped with the above result, we next provide an end-to-end guarantee for our full algorithm.
\begin{theorem}[Guarantee for the full algorithm]
    \label{thm::main2}
    Let $\bP$ be a distribution on $\bR^d$ satisfying the conditions in \Cref{thm::main}. Suppose a sample set of size $n\gtrsim ({k+\log(d/\delta)})/{\epsilon}$ is collected according to the strong contamination model (\Cref{assumption::comtamination}) with corruption parameter $\epsilon\lesssim \mu_{\min}^2/\sigma^2$.
    Then, with the choice of $\alpha=\Theta(\sigma\sqrt{\epsilon/d}\wedge \mu_{\max}^{\star -5})$ and $\eta=\Theta(\sigma\sqrt{\epsilon}/\mu_{\max}^{\star})$, our full algorithm runs in $\cO(nd\log(d))$ time and memory and, with a probability of at least $1-\delta$, outputs an estimate $\hat{\vmu}$ that satisfies
\begin{equation}
    \norm{\hat{\vmu}-\vmu^{\star}}_2\lesssim \sigma\sqrt{\epsilon}. \label{eq::6}
\end{equation}
\end{theorem}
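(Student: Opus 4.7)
The plan is to chain the exact-support-recovery guarantee of \Cref{thm::main} (Stage 1) with the $k$-dimensional robust dense estimator of \Cref{prop::k-dimensional} (Stage 2), and then audit the sample complexity and runtime. Since both ingredients are already in hand, the work lies in making their composition rigorous and in checking that the stated $n$, $\alpha$, $\eta$ suffice.

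First, I would invoke \Cref{thm::main} with failure probability $\delta/2$. Under the added hypothesis $\err \lesssim \mu_{\min}^{\star 2}/\sigma^2$ and the stated choice $\alpha = \Theta(\sigma\sqrt{\err/d} \wedge \mu_{\max}^{\star-5})$, the hard threshold at $\alpha$ cleanly separates support entries ($|\hat{\mu}_i(T)| \gtrsim \sigma\sqrt{\err} \gg \alpha$) from non-support entries ($|\hat{\mu}_i(T)| \lesssim \alpha$). Hence the event $E_1 := \{I = \supp(\vmu^{\star})\}$ holds with probability at least $1-\delta/2$.

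Second, to avoid issues with the data dependence between the two stages (since $I$ itself is a function of $S$), I would define $E_2$ as the sample-level event that \Cref{prop::k-dimensional}, applied to the projected set $\{[X_i]_{\supp(\vmu^{\star})} : X_i\in S\}$, succeeds with its own failure probability $\delta/2$. The projected clean distribution has mean $[\vmu^\star]_{\supp(\vmu^\star)}\in\bR^k$ and covariance $\preceq \sigma^2 I_k$ (a principal submatrix of a PSD matrix dominated by $\sigma^2 I$ remains so), and $\err$-strong contamination is preserved under coordinate projection. Thus $\Pr(E_2)\geq 1-\delta/2$, so by a union bound $E_1\cap E_2$ holds with probability at least $1-\delta$. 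On this event the algorithm's projection onto $I$ coincides with the projection onto $\supp(\vmu^\star)$, and setting coordinates outside $I$ to zero matches $\vmu^\star$ there, so
\begin{equation*}
\norm{\hat{\vmu}-\vmu^{\star}} = \norm{[\hat{\vmu}]_I-[\vmu^{\star}]_I} = \cO\!\left(\sigma\sqrt{\err} + \sigma\sqrt{k/n} + \sigma\sqrt{\log(1/\delta)/n}\right).
\end{equation*}
Under the assumed sample complexity $n \gtrsim (k+\log(d/\delta))/\err$, the last two terms are $\cO(\sigma\sqrt{\err})$, yielding the target bound in \eqref{eq::6}.

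For the runtime, Stage 1 requires $\cO(nd)$ preprocessing and $T=\Theta(\log(1/\alpha)/\eta)$ SubGM iterations each of cost $\cO(Jd)$ with $J=\Theta(\err n)$; under the stated choices of $\alpha,\eta$, the factor $\log(1/\alpha)/\eta$ is logarithmic in $d$ (absorbing problem-dependent constants into the big-$\cO$), and Stage 2 costs $\cO(kn)$ by \Cref{prop::k-dimensional}. Summing yields $\cO(nd\log d)$ as claimed. The main obstacle is precisely the bookkeeping around the data dependence between the two stages; defining $E_1$ and $E_2$ as events on $S$ alone (rather than conditioning the second on the first) sidesteps it without resorting to sample-splitting. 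Everything else follows mechanically from the two ingredients already proven.
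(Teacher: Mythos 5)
Your proof is correct and takes essentially the same approach as the paper's: invoke \Cref{thm::main} with failure probability $\delta/2$ for exact support recovery, apply \Cref{prop::k-dimensional} with failure probability $\delta/2$ to the projection onto the \emph{true} support, and union-bound. The paper's proof likewise projects onto $I_k=\supp(\vmu^\star)$ rather than the data-dependent $I$---implicitly the same resolution of the two-stage data dependence that you spell out explicitly---and omits the runtime audit you include.
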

Upon setting the sample size $n=\Theta\left( ({k+\log(d/\delta)})/{\epsilon}\right)$, our proposed two-stage method runs in $\tilde\cO(dk)$ time and memory and returns a solution with an error in the order of $\cO(\sigma\sqrt{\epsilon})$. Our next theorem shows that this error is indeed information-theoretically optimal up to a constant factor and thus cannot be improved.
\begin{theorem}[Information-theoretic lower bound]
\label{lem::information-bound}
There exists a distribution $\bP$ with $k$-sparse mean $\vmu^{\star}$, covariance matrix $\mSigma \preceq \sigma^2\mI$, and coordinate-wise third moment satisfying $\bE[|X_i-\mu_i^{\star}|^3]\lesssim \sigma^3/\sqrt{\epsilon}, \forall 1\leq i\leq d$ such that, given any arbitrarily large sample set collected according to the strong contamination model (\Cref{assumption::comtamination}) with corruption parameter $\epsilon$, no algorithm can estimate the mean $\mu^\star$ with $\ell_2$-error $o(\sigma\sqrt{\epsilon})$.
\end{theorem}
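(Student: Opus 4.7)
The plan is to apply Le Cam's two-point method. I will exhibit two distributions $\bP_1, \bP_2$ on $\bR^d$, both with $k$-sparse means satisfying the covariance and third-moment bounds, such that (i) their means differ by $\Omega(\sigma\sqrt{\err})$ and (ii) they are information-theoretically indistinguishable under $\err$-corruption. Because the adversary in the strong contamination model can match the observed sample distributions exactly, this style of lower bound holds regardless of sample size.

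For the construction, set $c = \sigma/\sqrt{\err}$ and take
\begin{equation*}
    \bP_1 = (1-\err/2)\delta_{0} + (\err/2)\delta_{c e_1}, \qquad \bP_2 = (1-\err/2)\delta_{0} + (\err/2)\delta_{-c e_1},
\end{equation*}
both supported on the first coordinate axis of $\bR^d$. The means $\vmu^\star_j = \pm(\sigma\sqrt{\err}/2)e_1$ are $1$-sparse (hence $k$-sparse) and satisfy $\|\vmu^\star_1 - \vmu^\star_2\| = \sigma\sqrt{\err}$. A direct calculation yields $\mathrm{Var}(X_1) = (\err/2)c^2 - (\err c/2)^2 \leq \sigma^2/2$, with all other coordinates deterministically zero, so $\mSigma \preceq \sigma^2 \mI$. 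The only nonzero coordinate-wise third central moment is bounded by $(\sigma\sqrt{\err}/2)^3 + (\err/2)c^3 \lesssim \sigma^3/\sqrt{\err}$, matching the required condition.

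Indistinguishability will follow from a standard coupling argument. A direct computation gives $d_{\mathrm{TV}}(\bP_1, \bP_2) = \err/2$. I will then exhibit an explicit observed distribution $R = (1-\err)(\bP_1 \vee \bP_2) + (\err/2 + \err^2/2)\delta_0$, where $\vee$ denotes pointwise maximum. This is a valid probability measure because $\int \bP_1 \vee \bP_2 = 1 + d_{\mathrm{TV}}$ and $(1-\err)(1+\err/2) \leq 1$; moreover, it satisfies $R \geq (1-\err)\bP_j$ pointwise for $j = 1, 2$. Writing $R = (1-\err)\bP_j + \err Q_j^{\mathrm{adv}}$ thus defines valid probability distributions $Q_j^{\mathrm{adv}}$, so the adversary can convert clean samples drawn from either $\bP_j$ into observed samples distributed as $R$. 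Any algorithm sees identically distributed input under the two hypotheses and must therefore incur error at least $\|\vmu^\star_1 - \vmu^\star_2\|/2 = \Omega(\sigma\sqrt{\err})$ on at least one of them.

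The main obstacle, beyond routine moment bookkeeping, is making the indistinguishability argument airtight: one must confirm that $R$ dominates $(1-\err)\bP_j$ pointwise and that the residual measures $Q_j^{\mathrm{adv}}$ are genuine probability distributions rather than signed measures. Both requirements reduce to the clean inequality $d_{\mathrm{TV}}(\bP_1, \bP_2) \leq \err/(1-\err)$, which holds immediately for our construction, so the argument closes without complications.
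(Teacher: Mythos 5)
Your proposal is correct and uses the same Le Cam two-point strategy as the paper: exhibit two distributions whose means differ by $\Omega(\sigma\sqrt{\err})$ but which can be made to yield identically distributed observations after $\err$-corruption. The only cosmetic differences are that you symmetrize the construction (two mixtures of $\delta_0$ with $\pm(\err/2)$ mass at $\pm\sigma/\sqrt{\err}$, rather than the paper's asymmetric $\delta_0$ versus $(1-\err)\delta_0 + \err\delta_{\sigma/\sqrt{\err}}$) and that you spell out the indistinguishability argument via an explicit common observed law $R$ with $R \geq (1-\err)\bP_j$, whereas the paper simply observes that $\bP_2 = (1-\err)\bP_1 + \err\bQ$ immediately gives the needed coupling.
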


{\bf Comparison to the existing lower bounds.} To achieve the optimal error rate, the sample complexity of our method scales linearly with the sparsity level $k$. A careful reader may realize that our sample complexity is unexpectedly smaller than the optimal sample complexity $\Omega((k\log(d/k)+\log(d/\delta))/\epsilon)$ introduced in \citet{lugosi2019near} when $k$ is sufficiently small. {This is due to the additional assumptions we impose on the coordinate-wise third moment of the distribution and the corruption parameter $\epsilon$.} On the other hand, it is recently shown in \citet{diakonikolas2019recent, prasad2020robust} that under the bounded third moment, the dependency of the estimation error on $\epsilon$ can be improved from $\epsilon^{1/2}$ to $\epsilon^{2/3}$. {Our worse dependency on $\epsilon$ is due to our more relaxed assumption on the third moment: unlike the assumptions made in \citet{diakonikolas2019recent, prasad2020robust}, our imposed upper bound on the third moment is inversely proportional to $\sqrt{\epsilon}$. Consequently, the imposed upper bound can get arbitrarily large with a smaller corruption parameter. In this extreme case where $\epsilon\to 0$, this condition can be dropped all together.}

\section{Proof Sketch}
\label{sec::proof-sketch}
In this section, we provide a proof sketch for the dynamics of SubGM (Theorem~\ref{thm::main}). To streamline the presentation, we keep our arguments at a high level; a more detailed proof is deferred in the supplementary materials. We analyze the coordinate-wise dynamic $\hat{\mu}_i(t)=u_i^2(t)-v_i^2(t)$ for $1\leq i\leq d$. Without loss of generality, we assume $\mu_i^{\star}\geq 0$. Upon defining $\beta_i(t)=\frac{1}{J}\sum_{j=1}^{J}\tildesign(\bar X_{j, i}-\hat{\mu}_i(t))$, the update rules for $\vu_i(t)$ and $\vv_i(t)$ can be written as 
\begin{equation}
    \begin{aligned}
        u_i(t+1)=\left(1+\eta\beta_i(t)\right)u_i(t), \quad v_i(t+1)=\left(1-\eta\beta_i(t)\right)v_i(t).
    \end{aligned}
\end{equation}
Based on the above update rules, $\beta_i(t)$ controls the growth rate of the dynamics. Indeed,  during the initial iterations, we have $\hat{\mu}_i(t)\approx\hat{\mu}_i(0) = u_i^2(0)-v_i^2(0) = 0$, which in turn implies that $\beta_i(t)\approx \beta_i(0)$. Consequently, the dynamics of $\vu_i(t)$ and $\vv_i(t)$ can be well approximated using the following exponential functions
\begin{equation}
    \vu_i(t)\approx (1+\eta\beta_i(0))^t\alpha, \quad \vv_i(t)\approx (1-\eta\beta_i(0))^t\alpha.
\end{equation}

Therefore, to analyze the behaviors of $\vu_i(t)$ and $\vv_i(t)$, it suffices to characterize the magnitude of $\beta_i(0)$ for different coordinates. To achieve this, we define $\cJ_{\text{clean}}$ as the index set of the subgroups $[J]$ that do not contain any outliers and $\cJ_{\text{outlier}}=[J]-\cJ_{\text{clean}}$. Consequently, we have
\begin{align}
    \beta_i(0)&=\frac{1}{J}\sum_{j\in \cJ_{\text{clean}  }}\tildesign(\bar X_{j, i})\pm \frac{\left|\cJ_{\text{outlier}}\right|}{J}\tag{denote $\delta=\frac{\left|\cJ_{\text{outlier}}\right|}{J}$}\\
    &\approx (1-\delta)\bE\left[\tildesign(\bar X_{j, i})\right]\pm \delta\tag{due to concentration bound}\\
    &=(1-\delta)\left(1-2\pr\left(\bar X_{j, i}-\mu_i^{\star}\leq-\mu_i^{\star}\right)\right)\pm \delta\nonumber\\
    &\approx (1-\delta)(1-2\Phi(-\mu_i^{\star}\cdot\sqrt{B/\Var(X)}))\pm\delta. \tag{due to finite-sample central limit theorem}
\end{align}
Here $\Phi(\cdot)$ represents the cumulative distribution function (CDF) of the standard Gaussian distribution. Let us define $\cI_{\text{residual}} = \{i: \mu_i^\star = 0\}$ and $\cI_{\text{signal}} = \{i: \mu_i^\star \not= 0\}$. Based on the above characterization of $\beta_i(0)$, we have $\beta_i(0)\approx \pm\delta$ for all $i\in \cI_{\text{residual}}$. Furthermore, by setting $J=C \lceil\epsilon n\rceil$ with a suitably large constant $C$, we can ensure $\beta_i(0)\approx \Omega(1-\delta)\pm \delta$ for all $i\in \cI_{\text{signal}}$ because $B=n/J\geq 1/(C\epsilon)$ can be made sufficiently large given a sufficiently small $\epsilon$. On the other hand, we have $\delta\leq \lceil\epsilon n\rceil/J\leq 1/C$ since $\left|\cJ_{\text{outlier}}\right|\leq \lceil\epsilon n\rceil$.
As a result, $|\beta_i(0)|$ can be made arbitrarily small for all $i\in \cI_{\text{residual}}$ and $\beta_i(0)= \Omega(1)$ for all $i\in \cI_{\text{signal}}$. This discrepancy in the growth rates of $u_i(t)$ and $v_i(t)$ enables our algorithm to separate the signals from residuals within just a few iterations. In the supplementary materials, we provide a more delicate analysis of the dynamics, showing that for all $T\in [\frac{2}{\eta}\log(1/\alpha), \frac{6}{\eta}\log(1/\alpha)]$ we have
\begin{equation}
    \begin{aligned}
        &\, u_i^2(t)-v_i^2(t)= \mu_i^{\star}\pm \cO(\sigma \sqrt{\epsilon}), \quad &&\text{for } i\in\cI_{\text{signal}},\\
        &\left|u_i^2(t)-v_i^2(t)\right|=\poly(\alpha)\ll \sigma \sqrt{\epsilon}, \quad &&\text{for }i\in \cI_{\text{residual}}.
    \end{aligned}
\end{equation}
The above equation sheds light on the key difference between \ref{eq::nonconvex} and \ref{eq::convex}: unlike ~\ref{eq::convex} where the error is equally distributed across different coordinates, the error in~\ref{eq::nonconvex} is primarily distributed among the signals, while the error in the residuals can be kept arbitrarily small by a proper choice of the initialization scale $\alpha$. 
This implies that, if the signals are sufficiently larger than the induced error, i.e., $|\mu_i^{\star}|\gtrsim \sigma\sqrt{\epsilon}, \forall i\in \cI_{\text{signal}}$, our algorithm can successfully identify the signals.

\section{Simulation}
\label{sec::simulation}
In this section, we present numerical simulations to corroborate the theoretical results established in \Cref{sec::main_result}. Further implementation details, together with additional simulation studies, are deferred to the appendix. The complete codebase is publicly accessible at \url{https://github.com/ying-hui-he/Robust_mean_estimation}.

\begin{figure*}
    \begin{centering}
    \subfloat[Success rate of identification of top-$k$ elements in Stage 1 with varying corruption ratio $\epsilon$]{
    {\includegraphics[width=0.9\linewidth]{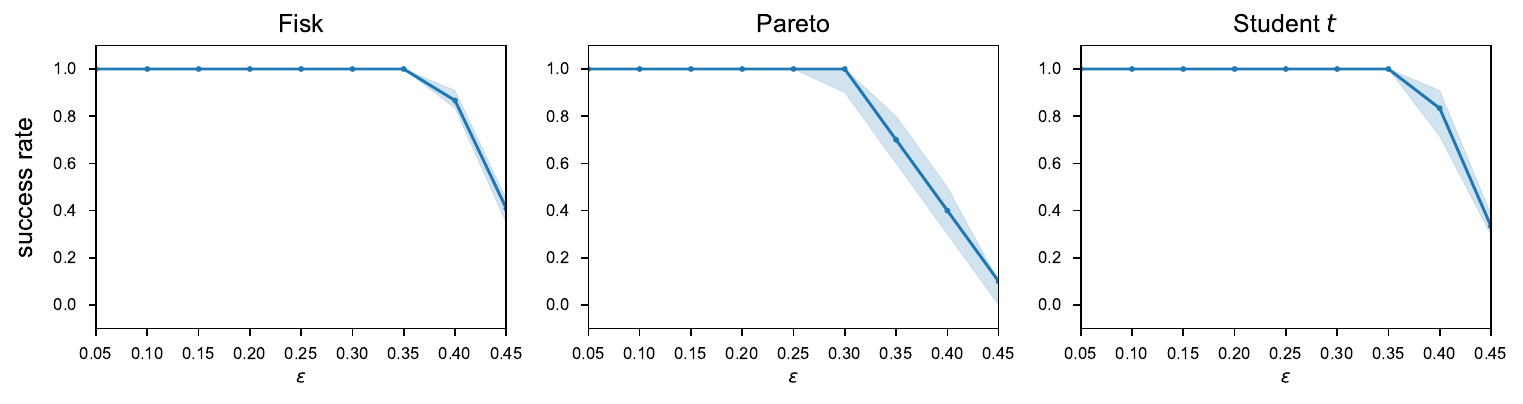}}\label{fig::success-rate}}\\
    \subfloat[Comparsion between Stage 1 and full algorithms with different sparsity $k$]{
    {\includegraphics[width=0.9\linewidth]{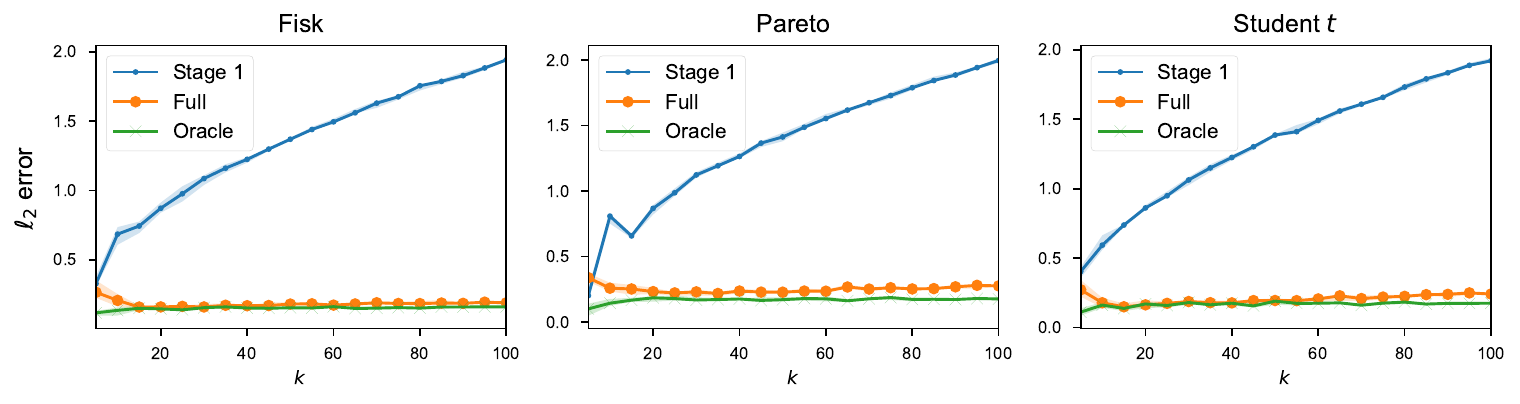}}\label{fig::three-loss-vs-k}}\\
    \subfloat[Performances of Stage 1 and full algorithms in infinite variance regime]{
    {\includegraphics[width=0.9\linewidth]{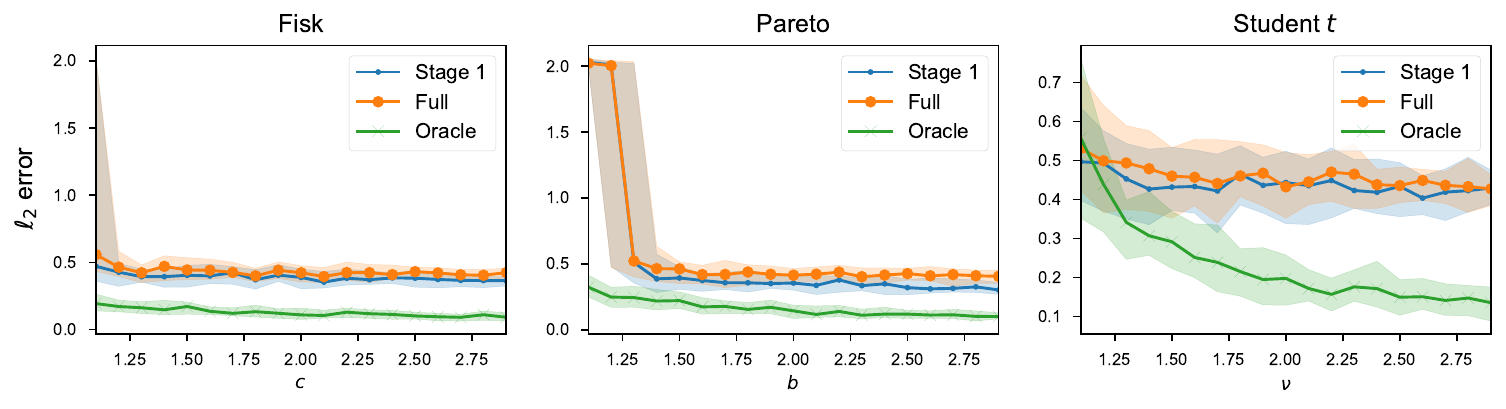}}\label{fig::three-loss-vs-param}}\\
    \end{centering}
    \caption{\footnotesize The data dimension is fixed at $d=100$. Unless otherwise specified, the corruption ratio is set to $\epsilon=0.1$ and the sparsity level to $k=4$. For the first two simulations, the distribution parameters $c, b, \nu$ are set to $3.1$ (see Appendix~\ref{sec::additional-simulations} for further details). The sample size is $n=600$ in the first and third simulations, while in the second simulation it is scaled as $n=100k$, varying proportionally with the sparsity level.}
    \label{fig::main-simulation}
\end{figure*}

\paragraph{Simulation setup.} All the experiments are conducted on a MacBook Pro 2021 with the Apple M1 Pro chip and a $16$GB unified memory. We pick three representative heavy-tailed probability distributions: Fisk, Pareto, and Student's $t$. To make a fair comparison, we fix the data dimension at $d=100$ and use the constant-bias noise model introduced in \citet{cheng2021outlier} to generate outliers. Unless otherwise stated, we set the corruption ratio at $\epsilon=0.1$ and the sparsity level at $k=4$. 
As for the algorithm in Stage 2, we utilize the filter-based algorithm \texttt{RME\_sp} introduced in \citet{diakonikolas2019outlier}. Furthermore, we compare our algorithms with the \textit{Oracle} estimator, which uses the coordinate-wise \textsf{MoM} on the clean data with an optimal choice of subgroup numbers. In all of our simulations, we set the number of iterations of SubGM to $200$, which is in line with our theoretical results.

\paragraph{Identification of top-$k$ elements.} In this experiment, we evaluate the success rate under varying corruption ratios $\epsilon$, while keeping all other parameters fixed. Our theoretical result (\Cref{thm::main}) indicates that provable identification is possible only when $\epsilon \lesssim \mu_{\min}^{\star 2}/\sigma^2$, suggesting that the success rate should deteriorate as $\epsilon$ increases. We define the recovered index set obtained by SubGM as $I$, and the true index set of the top-$k$ elements as $I_k$. The success rate is then measured as $|I \cap I_k| / |I \cup I_k|$. The results, presented in \Cref{fig::success-rate}, are averaged over $50$ independent trials for each setting. Notably, SubGM achieves exact recovery of the true index set $I$ even when up to $30\%$ of the samples are corrupted, highlighting the robustness and practical effectiveness of our method.

\paragraph{Comparison between Stage 1 and full algorithms.}
We evaluate the $\ell_2$-error of the Stage 1 and full algorithms across varying sparsity levels $k$. Our theoretical results predict a gap in $\ell_2$-error between the two algorithms---$\cO(\sigma\sqrt{k\epsilon})$ versus $\cO(\sigma\sqrt{\epsilon})$---when $k$ is sufficiently large. To minimize the influence of sample size, we set $n = 100k$, ensuring a sufficiently large number of samples. As shown in \Cref{fig::three-loss-vs-k}, the two algorithms perform comparably when $k$ is small. However, as $k$ increases, the $\ell_2$-error of Stage 1 grows sublinearly, while the full algorithm maintains a stable error level. These empirical findings are fully consistent with our theoretical predictions.

\paragraph{Infinite variance regime.}
In this experiment, we evaluate the performance of our algorithm in the infinite variance regime, fixing the sparsity level at $k=4$ and the sample size at $n=600$. When the distribution parameters $c, b, \nu$ fall within the interval $(1,2]$, the Fisk, Pareto, and Student’s $t$ distributions all exhibit infinite variance (see Appendix~\ref{sec::additional-simulations} for further details). As shown in \Cref{fig::three-loss-vs-param}, both Stage 1 and the full algorithm maintain strong performance in this setting, suggesting that our theoretical guarantees may extend to the infinite variance regime. Notably, Stage 1 consistently outperforms the full algorithm across all three distributions, implying that SubGM may possess greater robustness than existing estimators under infinite variance.

\section{Proofs}
\label{sec::deferred-proofs}
The proofs of our main results are organized as follows. \Cref{sec::technical-lemmas} presents preliminary lemmas. \Cref{sec::proofofthoerem1} establishes the convergence guarantee of SubGM (\Cref{thm::main}), and \Cref{sec::proof-theorem2} provides the end-to-end guarantee of the full algorithm (\Cref{thm::main2}). \Cref{sec::proof-lower-bound} derives the information-theoretic lower bound (\Cref{lem::information-bound}), and Appendix~\ref{sec::mom} proves a formal variant of \Cref{prop::mom-1d} to establish the properties of the \textsf{MoM} estimator.

\subsection{Preliminaries}
\label{sec::technical-lemmas}
This section presents all the technical lemmas that will be used to prove our main results.
\begin{lemma}[Chebyshev's inequality \protect{\cite[Corollary~1.2.5]{vershynin2018high}}]
    \label{lem::Chebyshev}
Suppose that $X\sim \bP$ with $\Var(X)<\infty$. Then, for any $\delta > 0$, we have
\begin{equation}
    \pr\left(\left|X-\bE[X]\right|\geq \delta\right)\leq \frac{\Var(X)}{\delta^2}.
\end{equation}
\end{lemma}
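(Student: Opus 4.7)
The plan is to derive the bound as a direct corollary of Markov's inequality applied to the nonnegative random variable $Y = (X - \bE[X])^2$. First I would recall (or briefly establish) Markov's inequality: for any nonnegative random variable $Y$ and any $t > 0$, one has $\pr(Y \geq t) \leq \bE[Y]/t$. The usual one-line justification is $\bE[Y] \geq \bE[Y \cdot \mathbbm{1}\{Y \geq t\}] \geq t \cdot \pr(Y \geq t)$.

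Next I would apply this with $Y = (X - \bE[X])^2$ and $t = \delta^2$. Because the map $x \mapsto x^2$ is monotone on $[0, \infty)$, the event $\{|X - \bE[X]| \geq \delta\}$ coincides with the event $\{(X - \bE[X])^2 \geq \delta^2\}$. Therefore
\begin{equation*}
\pr\bigl(|X - \bE[X]| \geq \delta\bigr) = \pr\bigl((X - \bE[X])^2 \geq \delta^2\bigr) \leq \frac{\bE[(X - \bE[X])^2]}{\delta^2} = \frac{\Var(X)}{\delta^2},
\end{equation*}
which is the claimed inequality. The hypothesis $\Var(X) < \infty$ ensures the numerator is finite so that the bound is meaningful.

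This is a classical textbook fact, so there is no real obstacle; the only thing to be careful about is the use of the monotonicity of squaring on nonnegative reals to rewrite the event, and the nonnegativity of $(X - \bE[X])^2$ so that Markov's inequality applies. A reference to \cite[Corollary~1.2.5]{vershynin2018high} suffices and the two-line derivation above can be included verbatim if a self-contained statement is desired.
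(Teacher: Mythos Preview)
Your proposal is correct and is the standard textbook derivation. The paper does not actually prove this lemma at all; it simply states it with a citation to \cite[Corollary~1.2.5]{vershynin2018high}, so your self-contained two-line argument via Markov's inequality applied to $(X-\bE[X])^2$ is exactly what one would supply if a proof were desired.
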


\begin{lemma}[Hoeffding's inequality \protect{\cite[Theorem~2.2.6]{vershynin2018high}}]
    \label{lem::hoeffding}
    Let $X_1, \cdots, X_n$ be independent random variables such that $a_{i}\leq X_{i}\leq b_{i}$ almost surely. Then for all $\delta>0$, we have 
    \begin{equation}
        \pr\biggl(\sum_{i=1}^{n}X_i-\bE[X_i]\geq \delta\biggr)\leq \exp\biggl\{-\frac{2\delta^2}{\sum_{i=1}^{n}(b_i-a_i)^2}\biggr\}.
    \end{equation}
\end{lemma}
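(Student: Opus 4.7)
The plan is to use the classical Chernoff bounding technique combined with Hoeffding's lemma on the moment generating function of a bounded centered random variable. First, I would introduce the centered variables $Y_i := X_i - \bE[X_i]$, which are independent, mean-zero, and satisfy $a_i - \bE[X_i] \leq Y_i \leq b_i - \bE[X_i]$, so their range is still $b_i - a_i$. For any $\lambda > 0$, Markov's inequality applied to the random variable $\exp(\lambda \sum_i Y_i)$ yields
\[
\pr\biggl(\sum_{i=1}^n Y_i \geq \delta\biggr) \leq e^{-\lambda \delta}\, \bE\biggl[\exp\biggl(\lambda \sum_{i=1}^n Y_i\biggr)\biggr] = e^{-\lambda \delta} \prod_{i=1}^n \bE\left[e^{\lambda Y_i}\right],
\]
where the factorization uses independence.

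The crux of the argument is the standard Hoeffding lemma: if $Y$ is a mean-zero random variable with $\alpha \leq Y \leq \beta$ almost surely, then $\bE[e^{\lambda Y}] \leq \exp(\lambda^2 (\beta - \alpha)^2 / 8)$. I would prove this via convexity of $y \mapsto e^{\lambda y}$: upper-bound $e^{\lambda y}$ on $[\alpha, \beta]$ by the chord through $(\alpha, e^{\lambda \alpha})$ and $(\beta, e^{\lambda \beta})$, take expectations, and use $\bE[Y]=0$ to write the result in the form $e^{\varphi(s)}$ with $s = \lambda(\beta-\alpha)$. A short calculation shows $\varphi(0) = \varphi'(0) = 0$ and, crucially, $\varphi''(s) \leq 1/4$, since $\varphi''(s)$ equals the variance of a two-point (Bernoulli-like) distribution supported on an interval of length $1$. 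A Taylor expansion then gives $\varphi(s) \leq s^2/8$.

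Plugging the Hoeffding lemma bound into the Chernoff inequality with the substitution $\beta - \alpha = b_i - a_i$ gives
\[
\pr\biggl(\sum_{i=1}^n Y_i \geq \delta\biggr) \leq \exp\biggl(-\lambda \delta + \frac{\lambda^2}{8} \sum_{i=1}^n (b_i - a_i)^2\biggr).
\]
Optimizing the right-hand side over $\lambda > 0$ by taking $\lambda^{\star} = 4\delta / \sum_i (b_i - a_i)^2$ yields the stated bound $\exp\bigl(-2\delta^2 / \sum_i (b_i - a_i)^2\bigr)$.

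The main obstacle is establishing the uniform bound $\varphi''(s) \leq 1/4$ in Hoeffding's lemma: one must recognize $\varphi''(s)$ as the variance of a random variable taking values in an interval of length $1$, whose variance is maximized (at $1/4$) by the two-point distribution on the endpoints. Every other step is essentially mechanical algebra and a single-variable optimization in $\lambda$.
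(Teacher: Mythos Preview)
Your argument is the standard Chernoff--Hoeffding proof and is correct as written. Note, however, that the paper does not supply its own proof of this lemma: it is stated as a technical tool with a citation to \cite[Theorem~2.2.6]{vershynin2018high}, so there is no in-paper proof to compare against. Your proposal is essentially the same proof one finds in that reference.
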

\begin{lemma}[Dvoretzky-Kiefer-Wolfowitz Inequality \citep{massart1990tight}]\label{lem_GC}
        Let $F(t) = \pr(X \leq t)$ be the CDF of a random variable $X$, and let $\hat F_n(\cdot)=\frac{1}{n}\sum_{i=1}^{n}\bI_{(-\infty, \cdot]}(X_i)$ be the empirical CDF based on $n$ i.i.d. samples $X_1,\dots, X_n\sim \bP$. We have 
        \begin{equation}
            \pr\left(\norm{\hat F_n-F}_{\infty}\geq t\right)\leq 2 e^{-2nt^2} \quad \text{for all }t \geq 0.
        \end{equation}
    \end{lemma}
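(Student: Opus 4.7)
The plan is a three-step reduction: (i) convert the bound for a general $F$ to the bound for the uniform distribution on $[0,1]$; (ii) split the two-sided supremum into two one-sided ones via a union bound, which produces the factor $2$; and (iii) invoke a sharp one-sided concentration inequality whose proof is due to Massart.

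For step (i), I would use the quantile (probability integral) transformation. Assuming first that $F$ is continuous, set $U_i := F(X_i)$; then $U_1,\dots,U_n$ are i.i.d.\ uniform on $[0,1]$, and monotonicity of $F$ combined with right-continuity of the indicator $\bI_{(-\infty,\cdot]}$ yields
$$\sup_{t\in\bR}\bigl|\hat F_n(t)-F(t)\bigr|\;=\;\sup_{u\in[0,1]}\bigl|\hat G_n(u)-u\bigr|,$$
where $\hat G_n$ denotes the empirical CDF of the $U_i$'s. For discontinuous $F$, adding an independent continuous perturbation of vanishing scale and passing to the limit reduces to the continuous case, so no loss of generality occurs.

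For step (ii), by a union bound,
$$\pr\Bigl(\sup_u|\hat G_n(u)-u|\ge t\Bigr)\;\le\;\pr\Bigl(\sup_u(\hat G_n(u)-u)\ge t\Bigr)+\pr\Bigl(\sup_u(u-\hat G_n(u))\ge t\Bigr),$$
and the two right-hand terms are equal by the symmetry $U_i\mapsto 1-U_i$. This is precisely where the factor $2$ in the stated bound originates, and it reduces the proof to the one-sided estimate $\pr(\sup_u(\hat G_n(u)-u)\ge t)\le e^{-2nt^2}$.

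For step (iii), note that $\hat G_n(u)-u$ jumps up at each $U_{(k)}$ and decreases linearly in between, so the supremum is attained just after an order statistic:
$$\sup_u\bigl(\hat G_n(u)-u\bigr)\;=\;\max_{1\le k\le n}\bigl(k/n-U_{(k)}\bigr).$$
I would then follow Massart's argument: express the event $\{\max_k(k/n-U_{(k)})\ge t\}$ using the joint density of the uniform order statistics, and apply an exponential-tilting / Chernoff-type bound on the induced binomial tail to obtain the tight constant $2$ in the exponent. The main obstacle is precisely this sharp constant; a generic chaining or VC-dimension argument would yield $Ce^{-cnt^2}$ only for suboptimal constants $C,c>0$, and the combinatorial identity required to tighten to $C=2,\,c=2$ is the hard technical core. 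Since the statement is cited directly to Massart's paper, I would invoke his one-sided inequality as a black box rather than reproduce its intricate proof, so the self-contained content of this proof consists of steps (i) and (ii) above.
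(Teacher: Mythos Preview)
The paper does not prove this lemma; it is listed among the technical preliminaries and simply attributed to Massart. Your proposal therefore goes beyond the paper, and the outline you give is the standard route to the DKW--Massart inequality: quantile-transform reduction to the uniform case, the symmetry $U_i\mapsto 1-U_i$ that converts the two-sided supremum into twice a one-sided tail, and an appeal to Massart's sharp one-sided bound $\pr\bigl(\sup_u(\hat G_n(u)-u)\ge t\bigr)\le e^{-2nt^2}$. All three steps are correct.

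One small simplification in step~(i): you do not need the suprema to be \emph{equal}, only the inequality $\sup_{t\in\bR}|\hat F_n(t)-F(t)|\le\sup_{u\in[0,1]}|\hat G_n(u)-u|$, and this holds for an arbitrary CDF $F$ by writing $X_i=F^{-1}(U_i)$ with the generalized (left-continuous) inverse and using that $\{F^{-1}(U_i)\le t\}=\{U_i\le F(t)\}$. This removes the need for the continuous-perturbation limiting argument in the discontinuous case. Since the paper itself treats the result as a citation, your decision to invoke Massart's one-sided inequality as a black box rather than reproduce its combinatorial proof is entirely in keeping with the paper's level of detail.
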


\begin{lemma}
    \label{lem::2}
    Suppose $X_1, \cdots X_n\overset{i.i.d.}{\sim}\bP$. Then, with probability at least $1-\delta$ and for all $a\in \bR$, we have 
    \begin{equation}
        \biggl|\frac{1}{n}\sum_{i=1}^{n}\tildesign\left(X_i-a\right)-\bE\left[\tildesign\left(X-a\right)\right]\biggr|\leq \sqrt{\frac{2\log(2/\delta)}{n}}.
    \end{equation}
\end{lemma}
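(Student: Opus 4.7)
The plan is to reduce the uniform-in-$a$ deviation bound to the Dvoretzky--Kiefer--Wolfowitz inequality already quoted as Lemma~\ref{lem_GC}. Let $F(a) = \pr(X \leq a)$ and $\hat F_n(a) = \frac{1}{n}\sum_{i=1}^n \bI_{(-\infty,a]}(X_i)$ denote the true and empirical CDFs of $\bP$.

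First, I would rewrite the sign function as an affine function of a single indicator tied to the CDF. Since $\bI_{(a,\infty)}(x) = 1 - \bI_{(-\infty,a]}(x)$, and (for a $\bP$ without atoms) $\bI_{(-\infty,a)}(X_i) = \bI_{(-\infty,a]}(X_i)$ almost surely, we have
\begin{equation*}
    \tildesign(X_i - a) \;=\; \bI_{(a,\infty)}(X_i) - \bI_{(-\infty,a)}(X_i) \;=\; 1 - 2\,\bI_{(-\infty,a]}(X_i) \quad \text{a.s.,}
\end{equation*}
and in expectation $\bE[\tildesign(X - a)] = 1 - 2F(a)$. Subtracting these two identities produces the clean relation
\begin{equation*}
    \frac{1}{n}\sum_{i=1}^n \tildesign(X_i - a) - \bE[\tildesign(X - a)] \;=\; -2\bigl(\hat F_n(a) - F(a)\bigr).
\end{equation*}

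Second, I would invoke Lemma~\ref{lem_GC} with threshold $t = \sqrt{\log(2/\delta)/(2n)}$, so that $\pr(\|\hat F_n - F\|_\infty \geq t) \leq \delta$. On the complementary event, the identity above immediately yields the desired bound $2t = \sqrt{2\log(2/\delta)/n}$ uniformly over all $a \in \bR$, since the supremum is taken simultaneously in the DKW statement.

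The only real subtlety, and the main obstacle, is handling distributions with atoms: when $\pr(X = a) > 0$, one has $\tildesign(X_i - a) = 0$ on $\{X_i = a\}$ and the identification $\bI_{(-\infty,a)} = \bI_{(-\infty,a]}$ breaks, so the summand no longer collapses to a single CDF term. In that case I would decompose $\tildesign(X - a) = \bI_{(a,\infty)}(X) - \bI_{(-\infty,a)}(X)$ and use DKW both on the right-continuous empirical CDF $\hat F_n$ and on the left-continuous version $\hat F_{n,-}(a) := \frac{1}{n}\sum_i \bI_{(-\infty,a)}(X_i)$; the latter follows by applying Lemma~\ref{lem_GC} to the i.i.d.\ sample $\{-X_i\}_{i=1}^n$ and a change of variable. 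A union bound then recovers the same $\sqrt{\log(1/\delta)/n}$ rate with only a mildly larger logarithmic constant ($\log 4$ in place of $\log 2$), which does not affect any downstream use of this lemma in the paper.
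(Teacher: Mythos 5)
Your proof matches the paper's: both rewrite $\tildesign(X_i - a)$ as $1 - 2\,\bI_{(-\infty,a]}(X_i)$ and invoke Lemma~\ref{lem_GC} with $t = \sqrt{\log(2/\delta)/(2n)}$. Your extra care about atoms is a refinement the paper glosses over, though no union bound is actually needed: the left-limit deviation $\bigl|\hat F_{n,-}(a) - F(a^-)\bigr|$ is already dominated by $\norm{\hat F_n - F}_\infty$ (take a left limit), so the decomposition $\tildesign(X-a)=\bI_{(a,\infty)}(X)-\bI_{(-\infty,a)}(X)$ combined with a \emph{single} DKW application recovers the stated constant $\sqrt{2\log(2/\delta)/n}$ even for atomic $\bP$.
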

\proof{Proof}
    Note that $\tildesign(X_i-a)=1-\bI_{(-\infty, a]}(X_i)-\bI_{(-\infty, a)}(X_i)$ and $\bE\left[\tildesign\left(X-a\right)\right]=1-\pr(X\leq a)-\pr(X< a)$. Therefore, we have 
    \begin{equation}
        \begin{aligned}
            &\biggl|\frac{1}{n}\sum_{i=1}^{n}\tildesign\left(X_i-a\right)-\bE\left[\tildesign\left(X-a\right)\right]\biggr|\\
            &\leq \biggl|\frac{1}{n}\sum_{i=1}^{n}\bI_{(-\infty, a]}(X_i)-\pr(X\leq a)\biggr|+\sup_{b<a}\biggl|\frac{1}{n}\sum_{i=1}^{n}\bI_{(-\infty, b]}(X_i)-\pr(X\leq b)\biggr|\\
            &\leq 2\norm{\hat F_n-F}_{\infty}.
        \end{aligned}
    \end{equation}
    Upon setting $t=\sqrt{\frac{1}{2n}\log\left(\frac{2}{\delta}\right)}$ in the Dvoretzky-Kiefer-Wolfowitz Inequality (\Cref{lem_GC}), with probability at least $1-\delta$ and for all $a\in \bR$, we have
    \begin{equation}
        \begin{aligned}
            \biggl|\frac{1}{n}\sum_{i=1}^{n}\tildesign\left(X_i-a\right)-\bE\left[\tildesign\left(X-a\right)\right]\biggr|\leq 2\norm{\hat F_n-F}_{\infty}\leq \sqrt{\frac{2\log(2/\delta)}{n}}.\hfill\square
        \end{aligned}
    \end{equation}
\endproof

\begin{lemma}[Berry-Esseen bound \protect{\cite[Theorem~2.1.3]{vershynin2018high}}]
    \label{lem::berry-esseen}
    Suppose $X_1, \cdots X_n\overset{i.i.d.}{\sim}\bP$, where $\bP$ has zero mean and bounded third moment, i.e., $\mu=\bE[X]=0, \rho=\bE[|X|^3]<\infty$. Then, upon denoting $Z_n=(\sum_{i=1}^{n}X_i)/\sqrt{n \sigma^2}$ where $\sigma^2=\bE[X^2]$, we have 
    \begin{equation}
        \sup_{a\in \bR}\left|\pr\left(Z_n<a\right)-\Phi(a)\right|\leq \frac{0.5\rho}{\sigma^3\sqrt{n}}.
    \end{equation}
    Here $\Phi(\cdot)$ is the CDF of standard Gaussian distribution.
\end{lemma}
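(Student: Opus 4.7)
The plan is to follow the classical Fourier-analytic proof of Berry-Esseen. Denote the characteristic function of $X$ by $\phi_X(t) = \bE[e^{itX}]$ and that of $Z_n$ by $\phi_{Z_n}(t) = \phi_X(t/\sqrt{n\sigma^2})^n$. The strategy has three steps: (i) invoke Esseen's smoothing inequality to reduce Kolmogorov distance to an integral of $|\phi_{Z_n}(t) - e^{-t^2/2}|/|t|$; (ii) Taylor-expand the per-sample characteristic function to third order and then exponentiate to compare $\phi_{Z_n}$ with $e^{-t^2/2}$; (iii) integrate and optimize the cutoff.

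In Step 1 I would state Esseen's smoothing lemma: for any $T>0$,
\begin{equation*}
    \sup_{a\in\bR}\left|\pr(Z_n < a) - \Phi(a)\right| \leq \frac{1}{\pi}\int_{-T}^{T}\frac{|\phi_{Z_n}(t) - e^{-t^2/2}|}{|t|}\,dt + \frac{C}{T},
\end{equation*}
where $C$ involves the maximum density of the Gaussian. In Step 2, using the moment assumptions $\bE[X]=0$, $\bE[X^2]=\sigma^2$, $\bE[|X|^3]=\rho$, I would Taylor-expand the per-sample characteristic function at $s = t/\sqrt{n\sigma^2}$ to obtain $\phi_X(s) = 1 - \tfrac{\sigma^2 s^2}{2} + R(s)$ with $|R(s)| \leq \rho|s|^3/6$. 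Using $|\log(1+z) - z| \leq |z|^2$ for small $z$, and choosing the cutoff $T = c\sigma^3\sqrt{n}/\rho$ for a small constant $c$ so that $|\phi_X(s) - 1| \leq 1/2$, I get
\begin{equation*}
    \left|\phi_{Z_n}(t) - e^{-t^2/2}\right| \lesssim \frac{\rho |t|^3}{\sigma^3\sqrt{n}}\,\exp(-t^2/3)
\end{equation*}
for $|t| \leq T$. In Step 3, dividing by $|t|$ and integrating against the Gaussian tail $e^{-t^2/3}$ yields a bound of order $\rho/(\sigma^3\sqrt{n})$, and the tail term $C/T$ contributes another $\rho/(\sigma^3\sqrt{n})$. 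Combining these gives the stated bound.

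The main obstacle is sharpening the universal constant to the claimed $0.5$. The elementary characteristic-function argument sketched above yields a constant that is considerably larger (Feller-type constants are on the order of $3$--$7$). Attaining $0.5$ requires either Tyurin's refinement via a more careful smoothing kernel or the modern Stein's-method proof (Shevtsova, Tyurin) that treats the leave-one-out decomposition of $Z_n$ analytically. Since the lemma is quoted verbatim from Vershynin, I would not reprove the sharp constant; instead I would reference the cited theorem and include the above sketch only to indicate how the qualitative $O(\rho/(\sigma^3\sqrt{n}))$ rate arises. Secondary technicalities (absolute continuity, handling the non-lattice case, and justifying the Taylor expansion uniformly on $|t|\leq T$) are routine once the moment bounds are in place.
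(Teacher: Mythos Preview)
The paper does not prove this lemma at all: it is stated in the appendix of technical lemmas with a direct citation to \cite[Theorem~2.1.3]{vershynin2018high} and no further argument. Your decision to ``reference the cited theorem'' is therefore exactly what the paper does, and your additional Fourier-analytic sketch (Esseen smoothing, third-order Taylor expansion of the characteristic function, integration against $e^{-t^2/3}$) is a correct outline of the classical proof but is extra material not present in the paper. Your self-assessment that the elementary argument yields a constant larger than $0.5$ and that the sharp constant requires Tyurin/Shevtsova-type refinements is accurate.
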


    \subsection{Proof of \Cref{thm::main}}
\label{sec::proofofthoerem1}

To prove this theorem, it is essential to first establish the uniform concentration of $\frac{1}{J}\sum_{j=1}^{J}\tildesign\left(\bar X_{j,i}+a\right)$ for all $a\geq 0$.
\begin{lemma}
    \label{lem::uniform-convergence}
    Suppose $X_1, \cdots X_n\overset{i.i.d.}{\sim}\bP$, where $\bP$ has zero mean, variance $\sigma^2$, and coordinate-wise third moment $\rho$. Moreover, suppose samples are generated according to the strong contamination model (\Cref{assumption::comtamination}) with corruption parameter $\epsilon$. Suppose $n\geq 20000\log(2d/\delta)/\epsilon$, $J= 100\lceil\epsilon n\rceil$, and $\rho\leq 0.005\sigma^3/\sqrt{\epsilon}$. Upon dividing the samples into $J$ equal subgroups $S_1, \cdots, S_J$ and denoting the empirical mean of each subgroup by $\bar X_j=\frac{1}{J}\sum_{k\in S_j}X_k$, with probability at least $1-\delta$, the following statements hold
    \begin{itemize}
        \item For all $a\geq 20\sigma\sqrt{\epsilon}$  and all $1\leq i\leq d$, we have: $\frac{3}{5}\leq \frac{1}{J}\sum_{j=1}^{J}\tildesign\left(\bar X_{j, i}+a\right)\leq 1.$
        \item For all $0\leq a\leq 0.001\sigma\sqrt{\epsilon}$ and all $1\leq i\leq d$, we have: $-0.08\leq \frac{1}{J}\sum_{j=1}^{J}\tildesign\left(\bar X_{j, i}+a\right)\leq 0.08.$
    \end{itemize}
\end{lemma}
\proof{Proof}
We prove the two cases separately. 

    \underline{\textit{Case 1}: $a\geq 20\sigma\sqrt{\epsilon}$}.
    We only need to prove the lower bound since the upper bound is trivial. We partition the index set of subgroups ${1, \ldots, J}$ into two disjoint subsets: $\cJ_{\text{clean}}$, containing all subgroups free of outliers, and $\cJ_{\text{outlier}}$, containing those with at least one outlier. Note that $|\cJ_{\text{outlier}}| \leq \lceil \epsilon n \rceil$. Therefore, we obtain
    \begin{equation}
        \frac{1}{J}\sum_{j=1}^{J}\tildesign\left(\bar X_{j, i}+a\right)\geq \frac{1}{J}\sum_{j\in \cJ_{\text{clean}}}\tildesign\left(\bar X_{j, i}+a\right)-\frac{\lceil\epsilon n\rceil}{J}.
    \end{equation}
    Next, applying \Cref{lem::2} and a union bound, we obtain that, with probability at least $1-\delta$ and for all $1\leq i\leq d$, 
    \begin{equation}
        \begin{aligned}
            \frac{1}{J}\sum_{j=1}^{J}\tildesign\left(\bar X_{j, i}+a\right)&\geq \frac{|\cJ_{\text{clean}}|}{J}\left(\bE\left[\tildesign\left(\bar X_{j, i}+a\right)\right]-\sqrt{\frac{2\log(2d/\delta)}{|\cJ_{\text{clean}}|}}\right)-\frac{\lceil\epsilon n\rceil}{J}\\
            &=\frac{|\cJ_{\text{clean}}|}{J}\left(1-2\pr\left(\bar X_{j, i}\leq -a\right)-\sqrt{\frac{2\log(2d/\delta)}{|\cJ_{\text{clean}}|}}\right)-\frac{\lceil\epsilon n\rceil}{J}.
        \end{aligned}
    \end{equation} 
    To proceed, one can write
    \begin{equation}
        \begin{aligned}
            \pr\left(\bar X_{j, i}\leq -a\right)&=\pr\biggl(\frac{\bar X_{j, i}}{\sqrt{\Var(X)/B}}\leq -\frac{a}{\sqrt{\Var(X)/B}}\biggr)\\
            &\stackrel{(a)}{\leq} \Phi\biggl(-\frac{a}{\sqrt{\Var(X)/B}}\biggr)+\frac{0.5\rho}{\sigma^3\sqrt{B}}\\
            &\stackrel{(b)}{\leq} \exp\biggl\{-\frac{Ba^2}{2\Var(X)}\biggr\}+\frac{0.5\rho}{\sigma^3\sqrt{B}}\\
            &\leq \exp\biggl\{-\frac{Ba^2}{2\sigma^2}\biggr\}+\frac{0.5\rho}{\sigma^3\sqrt{B}}.
        \end{aligned}
    \end{equation}
    Here, $(a)$ follows from the Berry-Esseen bound (\Cref{lem::berry-esseen}). In $(b)$, we use the concentration inequality for standard Gaussian distribution. Combining the above inequalities and recalling our choices of $J$, $B$, and $n$, we conclude that, with probability at least $1 - \delta$ and for all $1\leq i\leq d$,
    \begin{equation}
        \begin{aligned}
            \frac{1}{J}\sum_{j=1}^{J}\tildesign\left(\bar X_{j, i}+a\right)&\geq \frac{|\cJ_{\text{clean}}|}{J}\left(1-2\left(\exp\biggl\{-\frac{Ba^2}{2\sigma^2}\biggr\}+\frac{0.5\rho}{\sigma^3\sqrt{B}}\right)-\sqrt{\frac{2\log(2d/\delta)}{|\cJ_{\text{clean}}|}}\right)-\frac{\lceil\epsilon n\rceil}{J}\\
            &\geq 0.99\cdot\left(1-2\cdot\left(e^{-2}+0.025\right)-\sqrt{\frac{100}{99}\cdot\frac{1}{4500}}\right)-0.01\\
            &\geq \frac{3}{5}.
        \end{aligned}
    \end{equation}
    This completes the proof of the first statement.
    
    \underline{\textit{Case 2}: $0\leq a\leq 0.001\sigma\sqrt{\epsilon}$}. In this case, it suffices to provide an upper bound for $\left|\frac{1}{J}\sum_{j=1}^{J}\tildesign\left(\bar X_{j, i}+a\right)\right|$. Following a similar derivation as in \underline{\textit{Case 1}}, with probability at least $1-\delta$ and for all $1\leq i\leq d$, we have 

    \begin{equation}
        \begin{aligned}
            &\left|\frac{1}{J}\sum_{j=1}^{J}\tildesign\left(\bar X_{j, i}+a\right)\right|\\
            &\leq \frac{|\cJ_{\text{clean}}|}{J}\left(1-2\Phi\left(-\frac{a}{\sqrt{\Var(X)/B}}\right)+\frac{\rho}{\sigma^3\sqrt{B}}+\sqrt{\frac{2\log(2d/\delta)}{|\cJ_{\text{clean}}|}}\right)+\frac{\lceil\epsilon n\rceil}{J}\\
            &= \frac{|\cJ_{\text{clean}}|}{J}\left(2\Phi(0)-2\Phi\left(-\frac{a}{\sqrt{\Var(X)/B}}\right)+\frac{\rho}{\sigma^3\sqrt{B}}+\sqrt{\frac{2\log(2d/\delta)}{|\cJ_{\text{clean}}|}}\right)+\frac{\lceil\epsilon n\rceil}{J}\\
            &\stackrel{(a)}{\leq}\frac{1.98a}{\sqrt{\sigma^2/B}}+\frac{0.99\rho}{\sigma^3\sqrt{B}}+\sqrt{\frac{1.98\log(2d/\delta)}{J}}+\frac{\lceil\epsilon n\rceil}{J}\\
            &\le 1.98a\sqrt{B\epsilon}+0.05+\sqrt{\frac{1.98}{9000}}+0.01\\
            &\leq 0.08.
        \end{aligned}
    \end{equation}
    Here, in $(a)$, we use the anti-concentration for the standard Gaussian distribution. This completes the proof of the second statement. 
\endproof
We are now ready to present the proof of \Cref{thm::main}. To this goal, we first present a more precise version of its statement.

\begin{theorem}[Convergence guarantee for SubGM]
    \label{thm::main-v2}
    Let $\bP$ be a distribution on $\bR^d$ with an unknown $k$-sparse mean $\vmu^{\star}$, unknown covariance matrix $\mSigma \preceq \sigma^2\mI$, and unknown coordinate-wise third moment satisfying $\bE[|X_i-\mu_i^{\star}|^3]\leq 0.005\sigma^3/\sqrt{\epsilon}, \forall 1\leq i\leq d$. Suppose a sample set of size $n\geq 20000\log(2d/\delta)/\epsilon$ is collected according to the strong contamination model (\Cref{assumption::comtamination}) with corruption parameter $\epsilon$. Upon setting the stepsize $\eta\leq \sigma\sqrt{\epsilon}/\mu_{\max}^{\star}$ and the initialization scale $0<\alpha\leq 0.001\sigma\sqrt{\epsilon/d}\wedge \mu_{\max}^{\star -5}$ in \Cref{alg:main}, with a probability of at least $1-\delta$, the following statements hold for any iteration $\frac{2}{\eta}\log(1/\alpha)\leq T\leq \frac{6}{\eta}\log(1/\alpha)$:
\begin{itemize}
    \item \textbf{Near optimal $\ell_2$-error.} The $\ell_2$-error is upper-bounded by
    \begin{equation}
        \norm{\hat\vmu(T)-\vmu^{\star}}\leq 31\sigma\sqrt{k\epsilon}.
    \end{equation}
    \item \textbf{Coordinate-wise error bound.} We obtain 
    \begin{equation}
        \begin{aligned}
            |\hat{\mu}_i(T)-\mu_i^\star|&\leq 30\sigma\sqrt{\epsilon}, \quad &\text{where }\mu_i^{\star}\neq 0,\\
            |\hat{\mu}_i(T)|&\leq \alpha, \quad &\text{where }\mu_i^{\star}= 0.
        \end{aligned}
    \end{equation}
\end{itemize}
\end{theorem}
Before proceeding to the proof, we note that second statement of \Cref{thm::main-v2} together with the assumption $\epsilon\leq \mu_{\min}^{\star 2}/(961\sigma^2)$ readily implies $\hat{\mu}_i(T)\geq \sigma\sqrt{\epsilon}$ for every $i$ such that $\mu^\star_i\not=0$, leading to the second statement of \Cref{thm::main}.
\proof{Proof of \Cref{thm::main-v2}}
Let us define $\cI_{\text{residual}} = \{i: \mu_i^\star = 0\}$ and $\cI_{\text{signal}} = \{i: \mu_i^\star \not= 0\}$.
We analyze coordinate-wise dynamics $\hat\mu_i(t):=u_i^2(t)-v_i^2(t)$ separately for signals $\cI_{\text{signal}}$ and residuals $\cI_{\text{residual}}$.
\paragraph{Signal dynamics.} Without loss of generality, we assume that $\mu_i^{\star} > 0$. Let us first revisit the update rule for SubGM:
\begin{equation}
    \begin{aligned}
        u_i(t+1)&=\biggl(1+\eta \frac{1}{J}\sum_{j=1}^{J}\tildesign\left(\bar X_{j,i}-\hat\mu_i(t)\right)\biggr)u_i(t),\\
        v_i(t+1)&=\biggl(1-\eta \frac{1}{J}\sum_{j=1}^{J}\tildesign\left(\bar X_{j,i}-\hat\mu_i(t)\right)\biggr)v_i(t).
    \end{aligned}
    \label{eq::21}
\end{equation}
We further divide our analysis into two cases depending on the magnitude of $\left|\mu_i^{\star}\right|$.

\underline{\textit{Case 1}: $\mu_i^{\star}\geq 20\sigma\sqrt{\epsilon}$}. We define $T_i=\left\{\min t: \mu_i^{\star}-\hat\mu_i(t)< 20\sigma\sqrt{\epsilon}\right\}$. Hence, for all $0\leq t\leq T_i$, the first statement of \Cref{lem::uniform-convergence} can be invoked to show
\begin{equation}
    0.6\leq\frac{1}{J}\sum_{j=1}^{J}\tildesign\left(\bar X_{j, i}-\hat\mu_i(t)\right)\leq 1.
\end{equation}
By incorporating this into \Cref{eq::21}, we obtain 
\begin{align}
    u_i^2(t+1)&\geq \left(1+0.6\eta\right)^2u_i^2(t)\geq (1+1.2\eta)u_i^2(t),\\
    v_i^2(t+1)&\leq \left(1-0.6\eta\right)^2v_i^2(t)\leq v_i^2(t).
\end{align}
Notice that $v_i(0)=\alpha$ at the initialization. We find that $v_i^2(t) \leq \alpha^2, \forall 0\leq t \leq T_i$, which remains adequately small throughout the trajectory. Next, we examine the dynamics of $u_i^2(t)$. Taking into account that $u_i^2(0) = \alpha^2$ and $u_i^2(t) \geq (1+1.2\eta)^t u_i^2(0)$, we have that within $T_i \leq \frac{5}{3\eta}\log\left(\frac{|\mu_i^{\star}|}{\alpha}\right)$ iterations, the following holds
\begin{equation}
    u_i^2(T_i)\geq \alpha^2 (1+1.2\eta)^{T_i}\geq \mu_i^\star - 10\sigma\sqrt{\epsilon}.
\end{equation}
This implies
\begin{equation}
    \hat\mu_i(T_i)=u_i^2(T_i)-v_i^2(T_i)\geq \mu_i^\star - 10\sigma\sqrt{\epsilon}-\alpha^2\geq \mu_i^{\star}-20\sigma \sqrt{\epsilon}.
\end{equation}
Next, we show that $\left|\mu_i^{\star}-\hat\mu_i(T_i^{\star})\right|\leq 20\sigma \sqrt{\epsilon}$. To this goal, when $t< T_i$, we provide an upper bound on the difference between two consecutive iterations as follows
\begin{equation}
    \begin{aligned}
        |\hat\mu_i(t+1)-\hat\mu_i(t)|&\leq \left|u^2_i(t+1)-u^2_i(t)\right|+\left|v^2_i(t+1)-v^2_i(t)\right|\\
        &\stackrel{(a)}{\leq} \Bigg|\biggl(1+\eta \frac{1}{J}\sum_{j=1}^{J}\tildesign\left(\bar X_{j,i}-\hat\mu_i(t)\right)\biggr)^2-1\Bigg|\cdot u_i^2(t) + \alpha^2\\
        &\stackrel{(b)}{\leq} \left((1+\eta)^2-1\right) u_i^2(t) + \alpha^2\\
        &\stackrel{(c)}{\leq} 3\eta \mu_i^\star + 2\alpha^2\\
        &\stackrel{(d)}{\leq} 4\sigma\sqrt{\epsilon}.
    \end{aligned}
    \label{eq::39}
\end{equation}
Here in $(a)$, we use the fact that $\left|v^2_i(t+1)-v^2_i(t)\right|\leq \max\{v^2_i(t+1),v^2_i(t)\}\leq \alpha$. In $(b)$, we use the estimate $0.6\leq\frac{1}{J}\sum_{j=1}^{J}\tildesign\left(\bar X_{j, i}-\hat\mu_i(t)\right)\leq 1$. In $(c)$, we use the fact that $u_i^2(t)\leq \hat{\mu}_i(t)+v_i^2(t)\leq \mu_i^\star+\alpha^2$. Lastly, in $(d)$, we use the condition that $\eta \leq \frac{\sigma\sqrt{\epsilon}}{\mu_{\max}^{\star}}$ and $\alpha^2\leq 0.5\sigma\sqrt{\epsilon}$.
Hence, we have
\begin{equation}
    \hat\mu_i(T_i)-\mu_i^\star\leq \underbrace{\hat\mu_i(T_i-1)-\mu_i^\star}_{\leq 0 \text{ by definition of }T_i} + \left(\hat\mu_i(T_i)-\hat\mu_i(T_i-1)\right)\leq 4\sigma \sqrt{\epsilon}.
\end{equation}
Combining with the fact that $\hat\mu_i(T_i)- \mu_i^{\star}\geq-20\sigma \sqrt{\epsilon}$, we derive that $\left|\mu_i^{\star}-\hat\mu_i(T_i^{\star})\right|\leq 20\sigma \sqrt{\epsilon}$.

We will now demonstrate that for any $t \geq T_i^{\star}$, the condition $\left|\mu_i^{\star}-\hat\mu_i(t)\right| \leq 30\sigma\sqrt{\epsilon}$ always holds. Using the fact $n\geq 20000\log(2d/\delta)/\epsilon$ and \Cref{thm::mom-high-dim} (in the appendix), we have $|\hat{\mu}_{\textsf{MoM}}-\mu_i^{\star}| \leq 5\sigma\sqrt{\epsilon}$. Then, the triangle inequality implies
\begin{equation}
    \left|\mu_i^{\star}-\hat\mu_i(t)\right|\leq \left|\hat{\mu}_{\textsf{MoM}}-\mu_i^{\star}\right|+\left|\hat{\mu}_{\textsf{MoM}}-\hat\mu_i(t)\right|.
\end{equation}
Therefore, it suffices to show that $\left|\hat{\mu}_{\textsf{MoM}}-\hat\mu_i(t)\right|\leq 25\sigma\sqrt{\epsilon}$ for every $t\geq T_i^\star$. To this goal, we use induction on $t$. For $t=T_i^\star$, we have
\begin{equation}
    \left|\hat{\mu}_{\textsf{MoM}}-\hat\mu_i(T_i^{\star})\right|\leq \left|\hat{\mu}_{\textsf{MoM}}-\mu_i^{\star}\right|+\left|\mu_i^{\star}-\hat\mu_i(T_i^{\star})\right|\leq 25\sigma\sqrt{\epsilon}.
\end{equation}
Now, let us assume that at time $t\geq T_i^\star$, $\left|\hat{\mu}_{\textsf{MoM}}-\hat\mu_i(t)\right| \leq 25\sigma\sqrt{\epsilon}$. Without loss of generality, we assume $\hat\mu_i(t) \leq \hat{\mu}_{\textsf{MoM}}$. Based on the definition of the \textsf{MoM} estimator, we have
\begin{equation}
    \sum_{j=1}^{J}\tildesign\left(\bar X_{j, i}-\hat{\mu}_{\textsf{MoM}}\right)=0\implies\sum_{j=1}^{J}\tildesign\left(\bar X_{j, i}-\hat\mu_i(t)\right)\geq 0.
\end{equation}
Let $\beta_i(t) = \frac{1}{J}\sum_{j=1}^{J}\tildesign\left(\bar X_{j, i}-\hat\mu_i(t)\right)$. With this notation, we can derive the following inequality
\begin{equation}
    \begin{aligned}
        \hat\mu_i(t+1)-\hat\mu_i(t)&=(2\eta\beta_i(t)+\eta^2\beta_i^2(t))u_i^2(t)+(2\eta\beta_i(t)-\eta^2\beta_i^2(t))v_i^2(t)\geq 0,
    \end{aligned}
\end{equation}
where in the last inequality, we use the fact that $u_i^2(t)\geq v_i^2(t)$.
On the other hand, following exactly the same argument in \Cref{eq::39}, we have
\begin{equation}
    \begin{aligned}
        \hat\mu_i(t+1)-\hat\mu_i(t)\leq 4\sigma\sqrt{\epsilon}.
    \end{aligned}
\end{equation}
By combining the above two inequalities, we establish that $\left|\hat{\mu}_{\textsf{MoM}}-\hat\mu_i(t+1)\right| \leq 25\sigma\sqrt{\epsilon}$. This completes the proof of induction.

\underline{\textit{Case 2}: $|\mu_i^{\star}|\leq 20\sigma\sqrt{\epsilon}$}. Since $\hat{\mu}_i(0)=0$, at iteration $t=0$ we already have $\left|\mu_i^{\star}-\hat\mu_i(t)\right| \leq 20\sigma\sqrt{\epsilon}$. Consequently, the analysis reduces to the last phase of \underline{\textit{Case 1}}, from which we can conclude $\left|\mu_i^{\star}-\hat\mu_i(t)\right| \leq 30\sigma\sqrt{\epsilon}$ for all $t\geq 0$.

\paragraph{Residual dynamics.} In this case, we employ induction on $t$ to demonstrate that $|u_i^2(t)-v^2_i(t)| \leq \alpha$ for all $0 \leq t \leq T$. For the base case, this relationship is valid as $u_i^2(0)-v^2_i(0) = 0$. Assuming that this relation holds at time $t$, we can refer to \Cref{lem::uniform-convergence} and deduce
\begin{equation}
    -0.08\leq \frac{1}{J}\sum_{j=1}^{J}\tildesign\left(\bar X_{j, i}\right)\leq 0.08.
\end{equation}
Hence, we have 
\begin{equation}
    \begin{aligned}
        u_i^2(t+1)&\leq (1+0.08\eta)^2u_i^2(t)\leq \left(1+\eta/6\right)u_i^2(t),\\
        v_i^2(t+1)&\leq (1+0.08\eta)^2v_i^2(t)\leq \left(1+\eta/6\right)v_i^2(t).
    \end{aligned}
\end{equation}
Therefore, for all $t\leq \frac{6}{\eta}\log\left(\frac{1}{\alpha}\right)$, we obtain 
\begin{equation}
    \left|u_i^2(t)-v_i^2(t)\right|\leq \max\left\{u_i^2(t), v_i^2(t)\right\}\leq \alpha^2(1+\eta/6)^t\leq \alpha.
\end{equation}

\paragraph{Putting everything together.} Finally, since we set $\alpha \leq \frac{0.001}{\sqrt{d}}\sigma\sqrt{\epsilon}\wedge \mu_{\max}^{\star -5}$, for any $\frac{2}{\eta}\log\left(\frac{1}{\alpha}\right)\leq T\leq \frac{6}{\eta}\log\left(\frac{1}{\alpha}\right)$, we have 
\begin{equation}
    \begin{aligned}
        \norm{\hat\vmu(T)-\vmu^{\star}}_2&\leq \sqrt{k}\cdot 30\sigma\sqrt{\epsilon} + \sqrt{d}\alpha\leq 31\sigma\sqrt{k\epsilon}.
    \end{aligned}
\end{equation}
This completes the proof.
\endproof

\subsection{Proof of \Cref{thm::main2}}
\label{sec::proof-theorem2}
The proof follows by combining \Cref{thm::main} and \Cref{prop::k-dimensional}.
First, for the data distribution and corruption model considered in \Cref{thm::main}, once we set the sample size $n\gtrsim \log(d/\delta)/\epsilon$, then with probability at least $1-\delta/2$, we can successfully determine the location of the top-$k$ nonzero elements. For short, we represent the indices of these top-$k$ elements as $I_k$. Following the successful determination of these indices, we can then narrow our focus to a $k$-dimensional subproblem on the dataset $S_k:=\{[X_i]_{I_k}:X_i\in S\}$ with the mean $[\mu^{\star}]_{I_k}$. We can then apply \Cref{prop::k-dimensional} to this reduced dataset. Specifically, once the sample size satisfies $n\gtrsim (k+\log(d/\delta))/\epsilon$, there exists an estimator such that with probability at least $1-\delta/2$, it can output a $\hat{\mu}$ satisfying $\norm{\hat{\mu}-[\mu^{\star}]_{I_k}}\lesssim \sigma\sqrt{\epsilon}$. 

Combining these two steps via a simple union bound, we know that with a probability of at least $1-\delta$, our two-stage estimator $\hat{\mu}$ satisfies $\norm{\hat{\mu}-\mu^{\star}}\lesssim \sigma\sqrt{\epsilon}$. This concludes the proof. $\hfill\square$

\subsection{Proof of \Cref{lem::information-bound}}
\label{sec::proof-lower-bound}
Consider two probability distributions $\bP_1$ and $\bP_2$, where $\bP_2 = (1-\epsilon)\bP_1 + \epsilon \bQ$ for some distribution $\bQ$. Suppose we draw $n$ i.i.d. samples from $\bP_1$. Under the strong contamination model (\Cref{assumption::comtamination}) with parameter $\epsilon$, this same set of samples can be equivalently viewed as $\epsilon$-corrupted samples from $\bP_2$. Consequently, no algorithm can distinguish between the two cases (see \citet{li2019lecture2} for details).

Therefore, it suffices to construct two probability distributions satisfying the conditions in \Cref{lem::information-bound}. Without loss of generality, we focus on the one-dimensional case, since additional coordinates can be set identically. We require two distributions $\bP_1, \bP_2$ such that:
\begin{itemize}
\item Both distributions have variance at most $\sigma^2$ and third central moment at most $\sigma^3/\sqrt{\epsilon}$;
\item $\bP_2$ can be written as $(1-\epsilon)\bP_1 + \epsilon \bQ$ for some distribution $\bQ$;
\item Their means $\mu_1, \mu_2$ satisfy $|\mu_1 - \mu_2| \geq \sigma\sqrt{\epsilon}$.
\end{itemize}

Following \citet{li2019lecture2}, we construct $\bP_1$ as the point mass at $0$, and let $\bP_2 = (1-\epsilon)\bP_1 + \epsilon \bQ$, where $\bQ$ is the point mass at $\sigma/\sqrt{\epsilon}$. It is straightforward to verify that $\bP_1$ and $\bP_2$ satisfy all three conditions, completing the proof. \hfill$\square$

\section{Conclusion}
\label{sec::conclusion}

Many estimation tasks in statistics become notoriously difficult in the robust setting when certain assumptions on the data are lifted. For instance, almost all statistically optimal robust mean estimators suffer from overwhelmingly high computational costs. 
While classical results in robust statistics have shed light on the statistical limits of robust estimation, its computational aspects have mostly remained elusive.
In this work, we aim to bridge this gap by presenting the \textit{first} computationally efficient and statistically optimal method for robust sparse mean estimation, thereby overcoming a conjectured computational-statistical barrier under moderate conditions. 

\section*{Acknowledgements}
We thank Jikai Hou for insightful discussions. SF is supported, in part, by NSF Award DMS-2152776, ONR Award N00014-22-1-2127, and MICDE Catalyst Grant.

\printbibliography

\appendix

\section{\textsf{MoM} Estimator under Strong Contamination Model}
\label{sec::mom}

In this section, we prove the key properties of the $1$-dimensional and high-dimensional \textsf{MoM} estimators under the strong contamination model (\Cref{assumption::comtamination}). The following is a more precise statement of \Cref{prop::mom-1d}, which is adapted from Fact 2.1. in \citet{diakonikolas2022outlier}. As the complete proof does not appear in the original source, we include it here for completeness.
\begin{proposition}[One-dimensional \textsf{MoM} estimator]
    \label{thm::mom-1d}
    Consider a corruption parameter $\epsilon$, failure probability $\delta$, and a set $S$ of $n$ many $\epsilon$-corrupted samples from a distribution $\bP$ with mean $\mu^{\star}$ and variance $\bE[(X-\mu^{\star})^2]\leq \sigma^2$. Then, with probability at least $1-\delta$, the \textsf{MoM} estimator $\hat{\mu}_{\textsf{MoM}}$ satisfies $|\hat{\mu}_{\textsf{MoM}}-\mu^{\star}|\leq \sigma\left(4\sqrt{2}\left(\sqrt{\epsilon}+\sqrt{1/n}\right)+16\sqrt{\log(1/\delta)/n}\right)$.
\end{proposition}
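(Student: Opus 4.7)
I would follow the classical median-of-means (MoM) argument adapted to the strong contamination setting (in the spirit of the cited Fact~2.1 of \cite{diakonikolas2022outlier}). The backbone of the argument is the elementary deterministic fact that if strictly more than $J/2$ of the subgroup means $\bar X_j$ lie in the interval $[\mu^\star-t,\mu^\star+t]$, then so does their median $\hat\mu_{\mathsf{MoM}}$. The proof therefore reduces to showing that, with probability at least $1-\delta$ over the $\err$-corrupted sample, the number of subgroups that are either contaminated by an outlier or whose clean-data mean deviates from $\mu^\star$ by more than $t$ is strictly less than $J/2$, with $t$ matching the claimed bound.

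\textbf{Key steps.} I would pick the number of subgroups as $J = C_1\lceil\err n\rceil + C_2\log(1/\delta)$ for constants $C_1,C_2$ to be fixed at the end, yielding subgroups of size $B = n/J$. Since the adversary modifies at most $\err n$ samples, at most $\err n$ subgroups can be \emph{corrupted}; the remaining $J_c \geq J-\err n$ subgroups are \emph{clean}. For each clean subgroup, $\bar X_j$ is the average of $B$ i.i.d.\ samples from $\bP$ with variance at most $\sigma^2/B$, so by Chebyshev's inequality (\Cref{lem::Chebyshev}) we have $\pr(|\bar X_j-\mu^\star| > \sqrt{6}\sigma/\sqrt{B}) \leq 1/6$. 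Setting the candidate threshold $t_0 = \sqrt{6}\sigma/\sqrt{B}$, the indicators of $\{|\bar X_j-\mu^\star|>t_0\}$ over clean subgroups are independent Bernoullis with mean at most $1/6$, and Hoeffding's inequality (\Cref{lem::hoeffding}) yields, with probability at least $1-\delta$, a count of deviating clean subgroups of at most $J_c/6 + \sqrt{(J_c/2)\log(1/\delta)}$. Combining with the at-most-$\err n$ corrupted subgroups, the total number of bad subgroups is at most $\err n + J/6 + \sqrt{(J/2)\log(1/\delta)}$.

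\textbf{Conclusion and main obstacle.} The constants $C_1,C_2$ are then chosen so that this total is strictly less than $J/2$, which via the median majority argument gives $|\hat\mu_{\mathsf{MoM}}-\mu^\star| \leq t_0 = \sqrt{6}\sigma\sqrt{J/n}$. Unpacking via $\sqrt{a+b}\leq\sqrt{a}+\sqrt{b}$ produces a bound of the form $\sqrt{6}\sigma(\sqrt{C_1\err}+\sqrt{C_2\log(1/\delta)/n})$, which matches the stated $\sigma(2\sqrt{6\err}+64\sqrt{6\log(1/\delta)/n})$ for $C_1=4$ and any $C_2\leq 4096$; the very large coefficient $64\sqrt{6}$ on the logarithmic term leaves ample slack, as only a modest value of $C_2$ is actually required. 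The main technical obstacle is verifying that the inequality $\err n + J/6 + \sqrt{(J/2)\log(1/\delta)} < J/2$ holds in both the $\err n$-dominant and $\log(1/\delta)$-dominant regimes simultaneously; this is a quadratic-in-$\sqrt{J}$ constraint that pins down the feasible range of $(C_1,C_2)$ and is the only place where the bookkeeping is somewhat delicate. The remaining ingredients (Chebyshev, Hoeffding, the median majority principle) are entirely standard.
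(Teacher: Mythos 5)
Your proposal is correct and follows the same strategy as the paper's proof: both decompose subgroups into clean and contaminated, invoke the median-majority principle, bound the per-subgroup deviation probability via Chebyshev, and apply Hoeffding to the count of deviating clean subgroups. The only difference is bookkeeping—the paper fixes $\xi$ at the target bound and verifies $p_\xi \le 1/8$ with $J = 3\lceil\err n\rceil + 32\log(1/\delta)$, while you fix the Chebyshev failure probability at $1/6$ and derive $t_0 = \sqrt{6}\sigma\sqrt{J/n}$ with $J = C_1\lceil\err n\rceil + C_2\log(1/\delta)$; both close, and your observation that feasibility requires only a modest lower bound on $C_2$ (roughly $C_2 \gtrsim 9$ when $C_1 = 4$) while matching the claim requires $C_2 \le 4096$ is accurate.
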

    \proof{Proof}
        We partition the index set of the subgroups $\{1, \cdots, J\}$ into two parts: $\cJ_{\text{clean}}$ and $\cJ_{\text{outlier}}$. Here $\cJ_{\text{clean}}$ comprises all the subgroups without outliers, and $\cJ_{\text{outlier}}$ consists of subgroups containing at least one outlier. According to our strong contamination model, we have $|\cJ_{\text{outlier}}|\leq \lceil \epsilon n\rceil$. Subsequently, we observe that 
        \begin{equation}
            \{|\hat{\mu}_{\textsf{MoM}}-\mu^{\star}|\geq \xi\}\subseteq \biggl\{\sum_{j\in \cJ_{\text{clean}}}\bI(|\bar X_{j}-\mu^{\star}|\geq \xi)\geq \frac{J}{2}-\lceil \epsilon n\rceil\biggr\}.
        \end{equation}
        Here, $\bar X_{j}=\frac{1}{B}\sum_{i\in S_j}X_i$, where $B=n/J$ is the size of each subgroup and $S_j$ is the subgroup $j$. For simplicity, let us denote $Z_{j}=\bI(|\bar X_{j}-\mu^{\star}|\geq \xi)$ and $p_{\xi}=\pr\left(|\bar X_{j}-\mu^{\star}|\geq \xi\right)$. Then, the above inclusion implies 
        \begin{equation}
            \begin{aligned}
                \pr\left(|\hat{\mu}_{\textsf{MoM}}-\mu^{\star}|\geq \xi\right)&\leq \pr\biggl(\sum_{j\in \cJ_{\text{clean}}}Z_{j}\geq \frac{J}{2}-\lceil \epsilon n\rceil\biggr)\\
                &=\pr\biggl(\frac{1}{|\cJ_{\text{clean}}|}\sum_{j\in \cJ_{\text{clean}}}\left(Z_{j}-\bE[Z_{j}]\right)\geq \frac{J/2-\lceil \epsilon n\rceil}{|\cJ_{\text{clean}}|}-p_{\xi}\biggr).
            \end{aligned}
        \end{equation}
        Since $Z_{j}$ is bounded, we can apply Hoeffding's inequality (\Cref{lem::hoeffding}) to obtain
        \begin{equation}
            \pr\left(|\hat{\mu}_{\textsf{MoM}}-\mu^{\star}|\geq \xi\right)\leq \exp\biggl\{-2|\cJ_{\text{clean}}|\left(\frac{J/2-\lceil \epsilon n\rceil}{|\cJ_{\text{clean}}|}-p_{\xi}\right)^2\biggr\}.
            \label{eq::9}
        \end{equation}
        Moreover, we can use Chebyshev's inequality (\Cref{lem::Chebyshev}) to establish an upper bound for $p_\xi$:
        \begin{equation}
            p_{\xi}=\pr\left(|\bar X_{j}-\mu^{\star}|\geq \xi\right)\leq \frac{\sigma^2}{B\xi^2}=\frac{J\sigma^2}{n\xi^2}.
        \end{equation}
        Upon defining $J=4\lceil \epsilon n\rceil+32\log(1/\delta)$ and $\xi=\sigma\left(4\sqrt{2}\left(\sqrt{\epsilon}+\sqrt{1/n}\right)+16\sqrt{\log(1/\delta)/n}\right)$, we have the following estimates
        \begin{equation}
            \begin{aligned}
                |\cJ_{\text{clean}}|&\geq J -\lceil \epsilon n\rceil\geq 32\log(1/\delta);\\
                \frac{J/2-\lceil \epsilon n\rceil}{|\cJ_{\text{clean}}|}&\geq \frac{J/2-\lceil \epsilon n\rceil}{J}\geq \frac{1}{4};\\
                p_\xi&\leq \frac{J\sigma^2}{n\xi^2}\leq \frac{1}{8}.
            \end{aligned} 
        \end{equation}
        Combining these bounds with~\Cref{eq::9}, we obtain
        \begin{equation}
            \begin{aligned}
                &\pr\left(|\hat{\mu}_{\textsf{MoM}}-\mu^{\star}|\geq \sigma\left(4\sqrt{2}\left(\sqrt{\epsilon}+\sqrt{1/n}\right)+16\sqrt{\log(1/\delta)/n}\right)\right)\\&\leq \exp\biggl\{-2\cdot 32\log(1/\delta) \cdot \left(\frac{1}{4}-\frac{1}{8}\right)^2\biggr\}=\delta.
            \end{aligned}
        \end{equation}
        This completes the proof. $\hfill\square$
    \endproof

    Directly applying \textsf{MoM} estimator to each coordinate of a $d$-dimensional dataset leads to the following proposition.

    \begin{theorem}[High dimensional coordinate-wise \textsf{MoM} estimator]
        \label{thm::mom-high-dim}
        Consider a corruption parameter $\epsilon$, failure probability $\delta$, and a set $S$ of $n$ many $\epsilon$-corrupted samples from a distribution $\bP$ with mean $\mu^{\star}$ and coordinate-wise variance $\bE[(X-\mu^{\star})^2]\leq \sigma^2, \forall 1\leq i\leq d$. Then, with probability at least $1-\delta$, the coordinate-wise \textsf{MoM} estimator $\hat{\mu}_{\textsf{MoM}}$ satisfies $\norm{\hat{\vmu}_{\textsf{MoM}}-\vmu^{\star}}_{\infty}\leq \sigma\left(4\sqrt{2}\left(\sqrt{\epsilon}+\sqrt{1/n}\right)+16\sqrt{\log(d/\delta)/n}\right)$ and $\norm{\hat{\vmu}_{\textsf{MoM}}-\vmu^{\star}}_{2}\leq \sigma \sqrt{d}\left(4\sqrt{2}\left(\sqrt{\epsilon}+\sqrt{1/n}\right)+16\sqrt{\log(d/\delta)/n}\right)$.
    \end{theorem}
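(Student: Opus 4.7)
The plan is to reduce the high-dimensional claim to the one-dimensional statement in \Cref{thm::mom-1d} via a union bound across coordinates, then convert the resulting $\ell_\infty$ bound into an $\ell_2$ bound.

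First, I would observe that the coordinate-wise $\mathsf{MoM}$ estimator $\hat{\vmu}_{\mathsf{MoM}}$ is defined by applying the one-dimensional $\mathsf{MoM}$ estimator to each coordinate $i \in [d]$ of the corrupted dataset. Crucially, if the full sample set $S$ is $\err$-corrupted, then for each coordinate $i$, its projection onto that coordinate is also an $\err$-corrupted sample set (in one dimension) drawn from the marginal distribution $\bP_i$, which has mean $\mu_i^{\star}$ and variance at most $\sigma^2$. So the hypotheses of \Cref{thm::mom-1d} are satisfied coordinate-wise.

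Next, I would apply \Cref{thm::mom-1d} to each coordinate $i$ with failure probability $\delta/d$ in place of $\delta$. This yields, for each fixed $i$,
\begin{equation*}
    \pr\!\left(|\hat{\mu}_{\mathsf{MoM}, i}-\mu^{\star}_i|> \sigma\left(2\sqrt{6\err}+64\sqrt{6\log(d/\delta)/n}\right)\right)\leq \delta/d.
\end{equation*}
A simple union bound over the $d$ coordinates then gives, with probability at least $1-\delta$,
\begin{equation*}
    \max_{i\in[d]}|\hat{\mu}_{\mathsf{MoM}, i}-\mu^{\star}_i|\leq \sigma\left(2\sqrt{6\err}+64\sqrt{6\log(d/\delta)/n}\right),
\end{equation*}
which is exactly the $\ell_\infty$ statement. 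For the $\ell_2$ statement, I would use the elementary inequality $\norm{x}_2 \leq \sqrt{d}\,\norm{x}_\infty$ applied to $x=\hat{\vmu}_{\mathsf{MoM}}-\vmu^{\star}$.

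There is no real obstacle here; the argument is purely mechanical once \Cref{thm::mom-1d} is available. The only care needed is to ensure that the partitioning into $J$ subgroups and the choice $J=3\lceil\err n\rceil+32\log(d/\delta)$ (rather than $32\log(1/\delta)$) is used uniformly across all coordinates, so that the same subgroup means $\bar X_j$ suffice and the per-coordinate failure events are controlled at level $\delta/d$. The $\sqrt{d}$ loss going from $\ell_\infty$ to $\ell_2$ is precisely the suboptimal rate that motivates the nonconvex approach of the paper, so highlighting this slack is natural but not part of the proof itself.
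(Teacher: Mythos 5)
Your proposal matches the paper's proof exactly: the paper also invokes \Cref{thm::mom-1d} coordinate-wise with failure probability $\delta/d$, takes a union bound to get the $\ell_\infty$ bound, and passes to $\ell_2$ via $\norm{x}_2\le\sqrt{d}\,\norm{x}_\infty$. Your remark about running the one-dimensional estimator with $J=3\lceil\err n\rceil+32\log(d/\delta)$ so the per-coordinate failure probability is $\delta/d$ is a correct and worthwhile clarification of a detail the paper leaves implicit.
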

    \proof{Proof}
        The proof follows directly from \Cref{thm::mom-1d} and a simple union bound. $\hfill\square$
    \endproof

\section{Additional Simulations}
\label{sec::additional-simulations}

\subsection{Experimental Details}
We run our simulations on three heavy-tailed distributions: Fisk, Pareto, and Student's $t$ distributions. In each case, we apply a symmetrization trick to make the density function symmetric around zero. The density function of the Fisk distribution with parameter $c$ is expressed as follows:
\begin{equation}
    f(x; c)=\frac{c|x|^{c-1}}{2(1+|x|^c)^2}\quad \text{for }x\in \bR, c>0.
\end{equation}
The density function of the Pareto distribution with parameters $b$ is
\begin{equation}
    f(x; b)=\{ \begin{array}{cc}
         \frac{b}{2|x|^{b+1}} & \mbox{for}\quad |x|\geq 1, \\ 0  & \mbox{for}\quad |x|<1.
                \end{array}, \quad \text{for }x\in \bR, b>0.
\end{equation}
Lastly, the density function for student $t$-distribution is 
\begin{equation}
    f(x;\nu)=\frac{\Gamma\left(\frac{\nu+1}{2}\right)}{\sqrt{\nu\pi}\Gamma(\nu/2)}\left(1+\frac{x^2}{\nu}\right)^{-(\nu+1)/2}\quad \text{for }x\in \bR, \nu>0.
\end{equation}
Here $\Gamma$ is the gamma function.
In all three distributions described above, the parameters $c, b, \nu$ correspondingly denote the existence of the $c, b, \nu$-th moment. For instance, when $c, b, \nu$ fall within the range of $(1, 2]$, the variances are infinite. Regarding the outliers, we generate them via the constant-bias noise model as introduced in \citet{cheng2021outlier}.

Furthermore, unless stated otherwise, all simulations are conducted with the following predefined settings: data dimension $d$ is set to $100$, sparsity level $k$ is set to $4$ with nonzero elements being $[10,-5,-4,2]$, sample size $m$ is set to 600, and the corruption ratio $\epsilon$ is set at $10\%$.
As for our algorithm, we set the number of subgroups to be $J = 1.5\lceil\epsilon n\rceil + 150$. {Note that, compared to the theoretical choice of $J = 100\lceil\epsilon n\rceil$ in \Cref{alg:main}, we choose a smaller $J$ to make our algorithm work for a larger corruption ratio $\epsilon$ in practice.} Moreover, in SubGM, we set the initialization scale $\alpha=10^{-5}$ and the step-size $\eta=0.05$. 

We select \texttt{sparse\_GD} \citep{cheng2021outlier} and \texttt{sparse\_filter} \citep{diakonikolas2019outlier} as our benchmark algorithms. We note that these algorithms \textit{do not} come with theoretical assurances in the heavy-tailed setting. Nonetheless, we have empirically found that these two algorithms surpass others in performance, even in the heavy-tailed setting. We also highlight that the polynomial-time algorithms that come equipped with theoretical guarantees for heavy-tailed setting \citep{diakonikolas2022outlier, diakonikolas2022robust} are impractical since they rely on time-consuming methods such as sum-of-squares and ellipsoid methods. 

We employ both \texttt{sparse\_GD} and \texttt{sparse\_filter} in the second stage of our algorithm, setting the sparsity parameter to $k = |I|$, where $I$ is the index set identified in the first stage. In total, we evaluate six estimators: \texttt{oracle} (which removes all outliers), \texttt{sparse\_GD}, \texttt{sparse\_filter}, \texttt{stage\_1}, \texttt{full\_GD} (our algorithm with \texttt{sparse\_GD} in the second stage), and \texttt{full\_filter} (our algorithm with \texttt{sparse\_filter} in the second stage). In \texttt{stage\_1}, we run SubGM for $T=600$ iterations, whereas in \texttt{full\_GD} and \texttt{full\_filter}, we reduce the iteration count to $T=200$ to lower computational cost.

\subsection{Sensitivity to Prior Knowledge of $k$}
We underscore the fact that prior algorithms necessitate a prior knowledge of the exact sparsity level $k$. In contrast, our approach can identify the sparsity level automatically. For this simulation, we assign a true sparsity level of $k=10$ with nonzero components $[2,2,2,2,2,-2,-2,-2,-2,-2]$ and assess the performance of the benchmark algorithms, namely \texttt{sparse\_GD} and \texttt{sparse\_filter}, while varying the input $k'$, which is an upper bound of $k$, within the range of [10, 40]. As illustrated in \Cref{fig::prior-knowledge}, the performance of these benchmark algorithms is highly sensitive to the choice of $k'$ across all examined distributions. Their performances further destabilize when the underlying distributions start to exhibit heavier tails. In contrast, our algorithm automatically recognizes the sparsity pattern across all scenarios. For all subsequent simulations, we provide the benchmark algorithms with the true sparsity level $k$ to ensure a fair comparison.
\begin{figure*}[h]
    \centering
    \includegraphics[width=\textwidth]{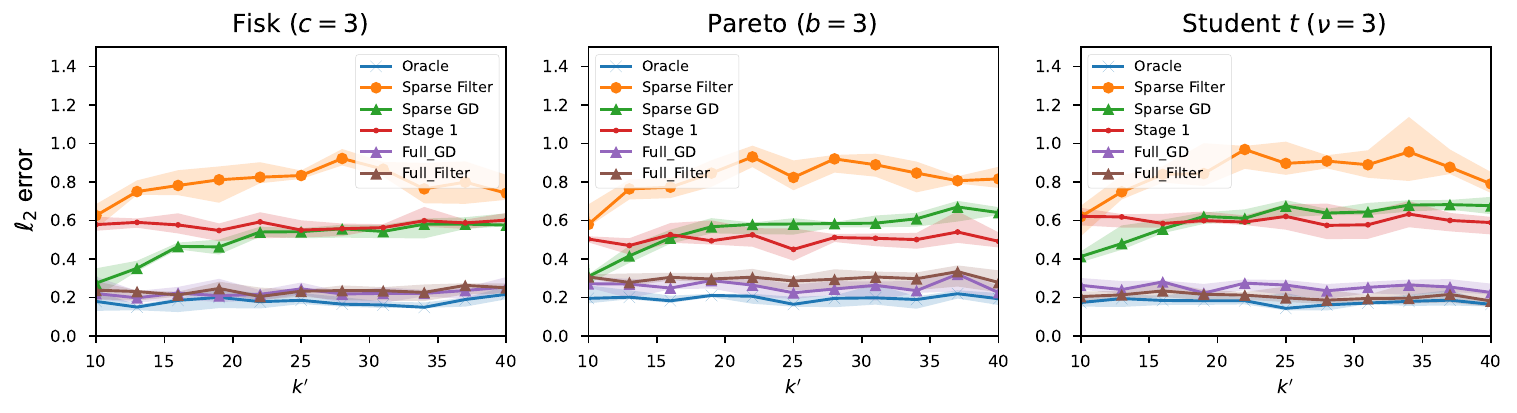}\\
    \includegraphics[width=\textwidth]{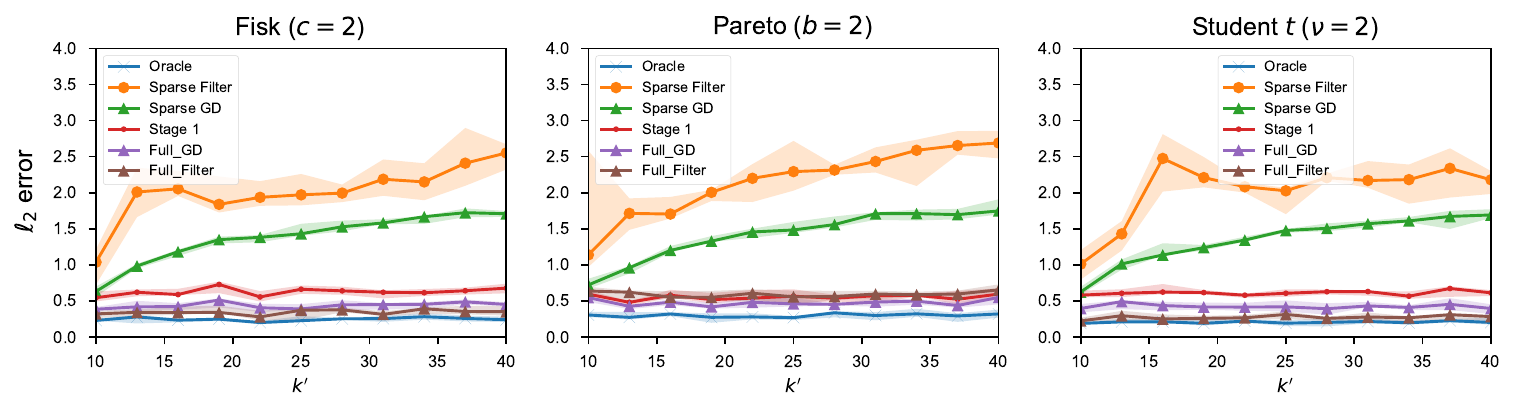}
    \caption{Comparison among different algorithms with varying input $k'$, where $k=10$ and $k'\geq k$ is an upper bound of $k$. The second row corresponds to distributions with infinite variance.}
    \label{fig::prior-knowledge}
\end{figure*}

\subsection{Performance with Different $k$}
In this simulation, we evaluate the performance of various algorithms under different sparsity levels $k$. We set all nonzero entries of $\mu^{\star}$ to $2$. As shown in the first row of \Cref{fig::different-k}, all algorithms---except \texttt{stage\_1} (as predicted by \Cref{thm::main}) and \texttt{sparse\_filter} (which underperforms at larger sparsity levels $k$)---achieve $\ell_2$-error that remains largely independent of sparsity. In more heavy-tailed settings, depicted in the second row of \Cref{fig::different-k}, all algorithms display an increase in $\ell_2$-error as $k$ grows. Nevertheless, across nearly all scenarios, our full algorithms (\texttt{full\_GD} and \texttt{full\_filter}) outperform the benchmarks. We further hypothesize that the weaker performance of \texttt{full\_filter} for the Pareto distribution with $b=2$ arises from the suboptimal performance of \texttt{sparse\_GD} and \texttt{sparse\_filter} when used in Stage~2.

\begin{figure*}[h]
    \centering
    \includegraphics[width=\textwidth]{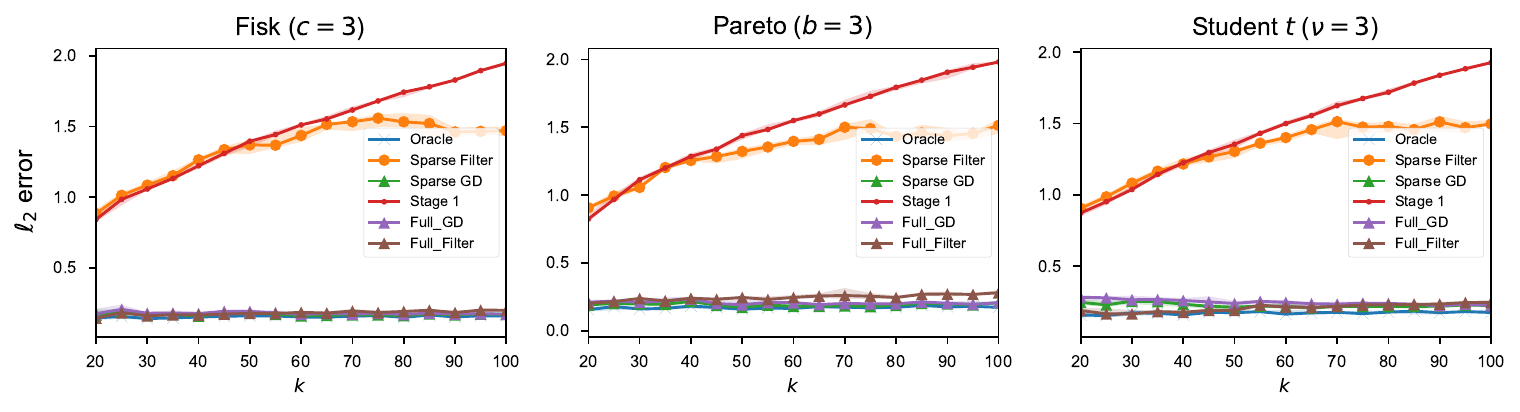}\\
    \includegraphics[width=\textwidth]{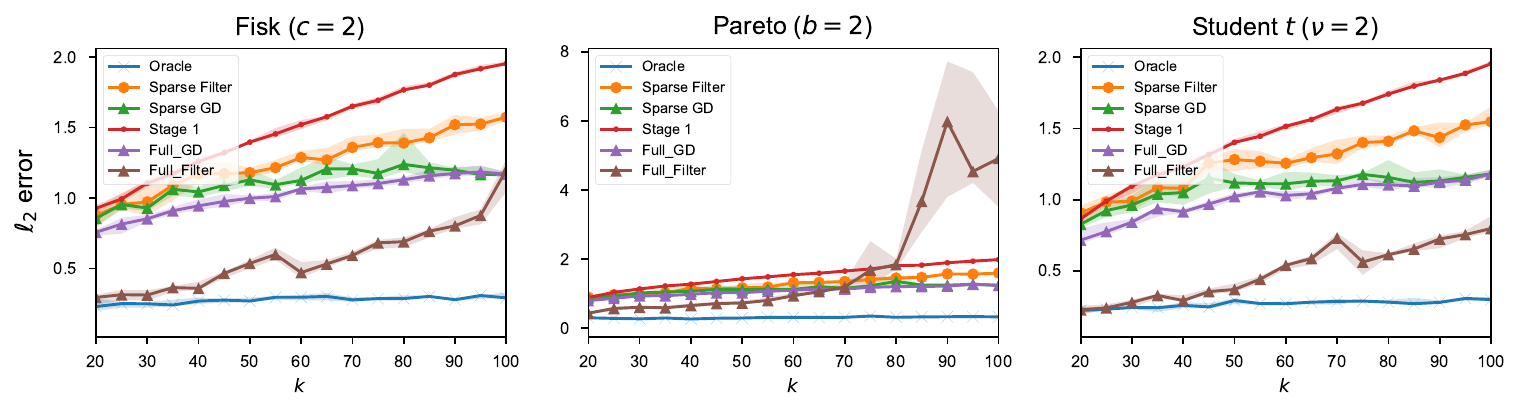}
    \caption{Comparison among different algorithms for varying sparsity levels $k$. The second row corresponds to distributions with infinite variance.}
    \label{fig::different-k}
\end{figure*}
\subsection{Infinite Variance Regime}
In this simulation, we evaluate the performance of the algorithms with respect to the heaviness of the tail distributions. As shown in \Cref{fig:infinite-variance}, we vary the parameters $c, b, \nu$ over the range $[1, 3.5]$. Smaller parameter values correspond to heavier tails, with values in the interval $(1,2]$ resulting in distributions of infinite variance. Our algorithms (\texttt{stage\_1}, \texttt{full\_GD}, and \texttt{full\_filter}) demonstrate superior robustness under these heavy-tailed conditions, highlighting the advantage of our approach.
\begin{figure*}[h]
    \centering
    \includegraphics[width=\textwidth]{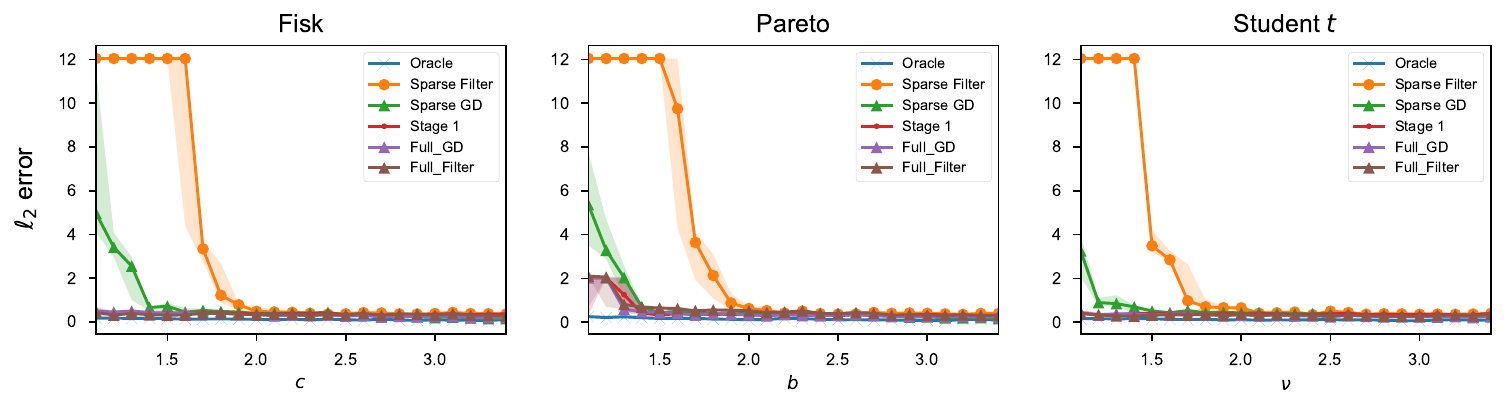}
    \caption{Comparison among different algorithms in the infinite variance regime.}
    \label{fig:infinite-variance}
\end{figure*}

\subsection{Performance with Different $\epsilon$}
In this simulation, we study the relationship between the $\ell_2$-error and the corruption ratio $\epsilon$ across all six estimators. As shown in \Cref{fig:different-err}, apart from the \texttt{Oracle}---whose error remains unaffected by $\epsilon$ (as expected)---our proposed algorithms (either single-stage or full version) consistently outperform the alternatives. While our theoretical analysis predicts an $\ell_2$-error of order $\cO(\sqrt{\epsilon})$, the empirical results reveal an approximately linear dependence on $\epsilon$. We attribute this discrepancy to the non-adversarial nature of the outlier model used in our experiments.
\begin{figure*}[h!]
    \centering
    \includegraphics[width=\textwidth]{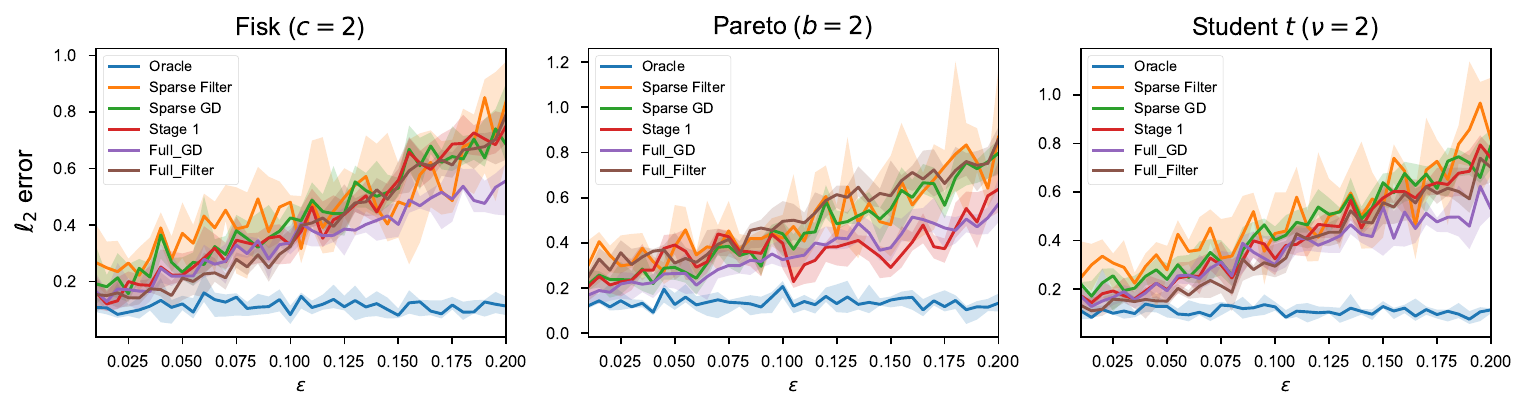}
    \caption{Comparison among different algorithms for different corruption rates $\epsilon$.}
    \label{fig:different-err}
\end{figure*}

\subsection{Running Time}
Next, we examine the running time of our algorithms. Specifically, we run $600$ iterations for \texttt{stage\_1}, while in the full algorithms we restrict Stage~1 to $200$ iterations. As shown in \Cref{fig::runtime}, all algorithms exhibit linear runtime, consistent with our theoretical guarantees.
\begin{figure*}[h]
    \centering
    \includegraphics[width=0.5\textwidth]{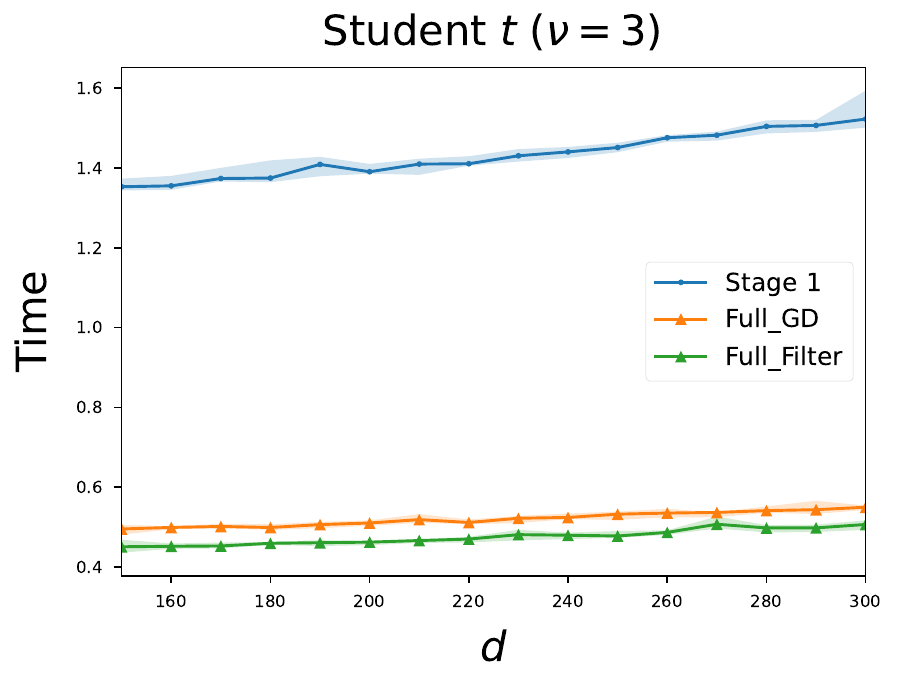}
    \caption{Running time of the proposed single-stage and full algorithms as a function of dimension.}
    \label{fig::runtime}
\end{figure*}

\end{document}